\documentclass[lettersize,journal]{IEEEtran}

\usepackage{algorithmic}
\usepackage{array}
\usepackage{textcomp}
\usepackage{url}
\usepackage{verbatim}
\usepackage{cite}
\usepackage[utf8]{inputenc} 
\usepackage[T1]{fontenc}    
\usepackage{url}            
\usepackage{booktabs}       
\usepackage{amsfonts}       
\usepackage{amsmath} 
\usepackage{amssymb}  
\usepackage{amsthm}
\usepackage{mathrsfs}
\usepackage[ruled]{algorithm2e}
\usepackage{bm}
\usepackage{stfloats}
\usepackage{subcaption}
\usepackage{graphicx}
\usepackage{nicefrac}       
\usepackage{microtype}      

\newcommand{\mA}{\mathcal{A}}

\newcommand{\mC}{\mathcal{C}}

\newcommand{\mS}{\mathcal{S}}

\newcommand{\mL}{\mathcal{L}}

\newcommand{\R}{\mathbb{R}}
\newcommand{\mbE}{\mathbb{E}}

\newcommand{\mO}{\mathcal{O}}

\newcommand{\tx}{\tilde{x}}

\newcommand{\dd}{\mathrm{d}}
\newcommand{\dX}{\mathrm{d}X}

\newcommand{\dt}{\mathrm{d}t}
\newcommand{\dtau}{\mathrm{d}\tau}
\newcommand{\dW}{\mathrm{d}W}

\newcommand{\tX}{\tilde{X}}

\newcommand{\Until}{\mathsf{U}}
\newcommand{\Always}{\square}
\newcommand{\Eventually}{\lozenge}
\newcommand{\real}{\mathbb{R}}

\newcommand{\amgf}[1]{\Phi(#1)}

\newcommand{\defeq}{:=}
\newcommand{\tr}[1]{\text{tr}(#1)}

\newcommand{\innerp}[1]{\left< #1 \right>}
\newcommand{\expect}[1]{\mathbb{E}\left( #1 \right)}
\newcommand{\expectw}[2]{\mathbb{E}_{#1}\left( #2 \right)}
\newcommand{\prob}[1]{\mathbb{P}\left( #1 \right)}

\renewcommand{\top}{\mathsf{T}}

\newtheorem{definition}{Definition}
\newtheorem{lemma}{Lemma}
\newtheorem{thm}{Theorem}
\newtheorem{proposition}{Proposition}

\newtheorem{assumption}{Assumption}
\newtheorem{remark}{Remark}
\newtheorem{problem}{Problem}

\begin{document}

\title{Feedback Motion Planning for Stochastic Nonlinear Systems with Signal Temporal Logic Specifications}

\author{Liqian Ma$^{*}$, Zishun Liu$^{*}$, Glen Chou, and Yongxin Chen
\thanks{The first two authors share equal contributions. This work is partially supported by NSF 2450378, NSF 2206576, and AFOSR FA9550-25-1-0169.}%
\thanks{The authors are with Georgia Institute of Technology, Atlanta, GA 30332 
        {\tt\small \{mlq\}\{zliu910\}\{chou\}\{yongchen\}@gatech.edu}}%
}


\maketitle

\begin{abstract}
We study feedback motion planning for continuous-time stochastic nonlinear systems under signal temporal logic (STL) specifications. We propose a framework that synthesizes control policies for chance-constrained STL trajectory optimization problems, with the goal of ensuring that the closed-loop stochastic system satisfies a given STL formula with high probability (e.g., 99.99\%). Our approach is based on a predicate erosion strategy that transforms the intractable stochastic problem into a deterministic STL trajectory optimization problem with tightened STL formula constraints. The amount of erosion is determined by a probabilistic reachable tube (PRT) that bounds the deviation between the stochastic trajectory and an associated nominal trajectory. To compute such bounds, we leverage contraction theory and feedback design, and develop several tracking controllers. This yields a complete feedback motion planning pipeline which can be implemented by numerical optimizations. We demonstrate the efficacy and versatility of the proposed framework through simulations on several robotic systems and through experiments on a real-world quadrupedal robot, and show that it is less conservative and achieves higher specification satisfaction probability than representative baselines.
\end{abstract}

\begin{IEEEkeywords}
Motion and Path Planning, Formal Methods in Robotics and Automation, Probability and Statistical Methods, Signal Temporal Logic (STL)
\end{IEEEkeywords}

\section{Introduction}

\begin{figure}[t!]
    \centering
    \includegraphics[width=\linewidth]{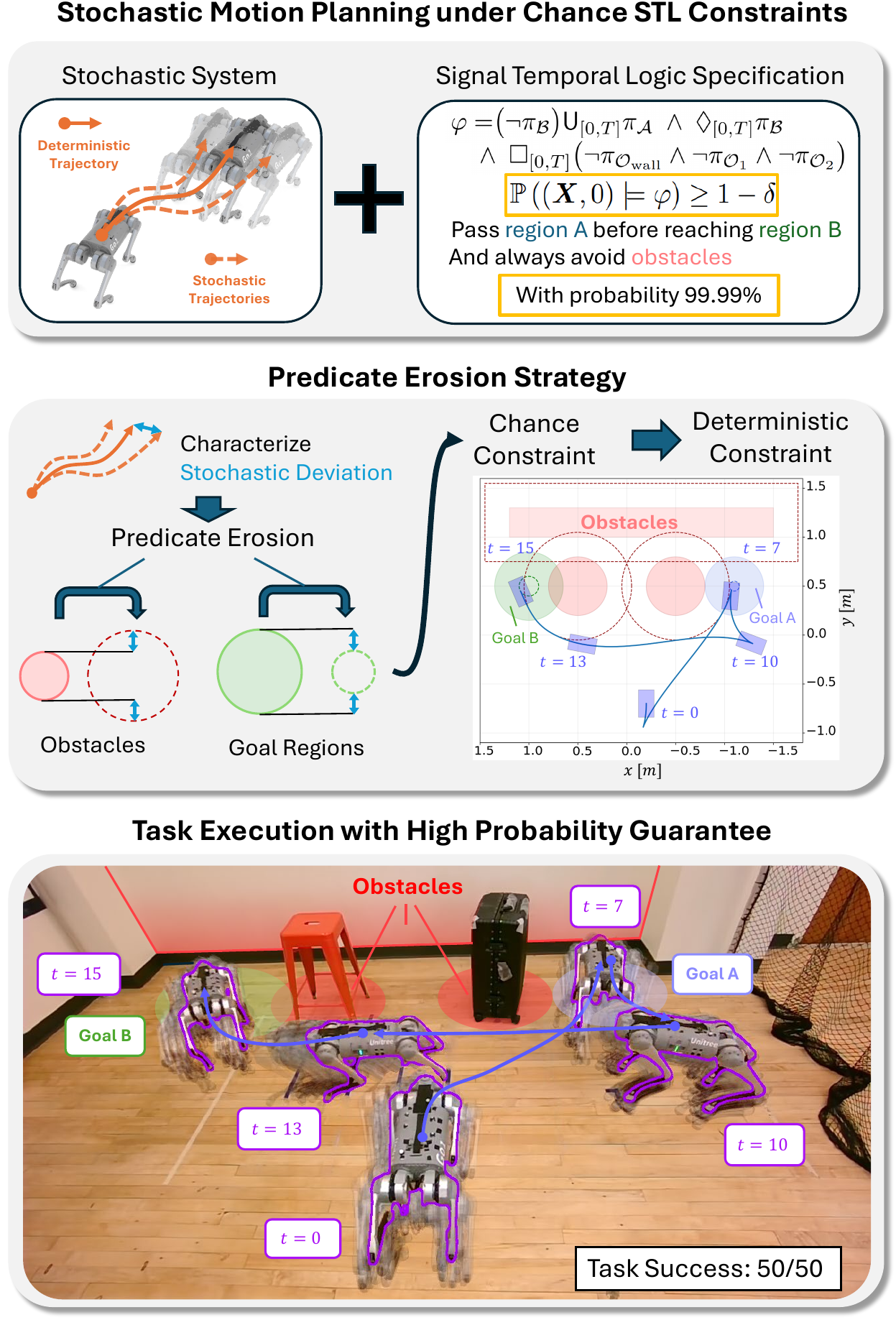}
    \caption{\looseness-1\textbf{Overview.}
    \textbf{Top:} Problem setup. We consider a stochastic nonlinear system subject to a chance-constrained STL specification.
    \textbf{Middle:} Predicate erosion strategy. We characterize the stochastic deviation between the closed-loop stochastic trajectory and its associated deterministic trajectory by a PRT, and use this tube to erode the obstacle and goal predicates. This converts the original chance-constrained problem into a deterministic STL-constrained motion planning problem with a tightened STL formula. 
    \textbf{Bottom:} Task execution. The quadrupedal robot successfully completes the task in all 50 trials. One representative trajectory is highlighted in purple, while the remaining trajectories are overlaid transparently for visualization.
    }
    \label{fig:teaser}
\end{figure}

\IEEEPARstart{R}{obotic} systems operate under uncertainty and they must satisfy precisely stated behavioral requirements under safety- and mission-critical constraints.
To reason about such requirements rigorously, practitioners employ \emph{formal specifications} that capture desired properties as statements over system trajectories. 
While classical safe-set specifications only encode spatial constraints (the state must never enter an unsafe region)~\cite{prajna2004safety}, many real-world tasks are inherently \emph{spatio–temporal}: they involve deadlines, sequencing, or persistence requirements (e.g., “visit region A before B and remain there for at least $\tau$ seconds”). 
\emph{Signal Temporal Logic} (STL) provides an intuitive way to specify a wide range of these high-level properties in continuous time and over real-valued signals~\cite{maler2004monitoring}. Therefore, it is well-suited to applications in continuous-time dynamical systems, such as robotics~\cite{pant2018fly, meng2023signal, chen2018signal, silano2021power, sewlia2022cooperative}.

In this context, verification asks whether a given closed-loop system satisfies an STL formula~\cite{roehm2016stl, lercher2024using, ma2025verification}, whereas \emph{control synthesis} aims to \emph{design} control strategies that guarantee satisfaction of the STL specification. The latter is crucial for deploying dependable autonomy at scale and is the focus of this work. Multiple approaches to control synthesis under STL specifications have been developed in recent years. 
One prominent line of research encodes satisfaction at sampled time instants by introducing binary variables, and then solves a mixed-integer convex program~\cite{raman2014model, sadraddini2015robust}. However, this type of encoding cannot handle nonlinear dynamics or nonlinear predicates. 
Another line of work formulates control synthesis as a continuous optimization by encoding the STL specification via its robustness metric, replacing the nonsmooth $\min/\max$ operators with differentiable surrogates, and solving the resulting problem using standard nonlinear programming solvers~\cite{gilpin2020smooth, leung2023backpropagation, meng2023signal, han2025exactsmoothreformulationstrajectory}, but the resulting nonconvex optimization problems can suffer from convergence to local minima.
Beyond sampled-time formulations, direct continuous-time synthesis methods have also been studied. Early approaches address restricted fragments of STL and typically exclude nested temporal structure (specifications where temporal operators are composed within one another)~\cite{lindemann2018control,lindemann2019control}, while more recent work extends control barrier function (CBF) based synthesis to nested STL formulas ~\cite{yu2024continuous} by encoding satisfaction via backward reachability, a step that can become computationally demanding and hence challenging to scale with system dimension. Moreover, \cite{kurtz2023temporal} leverages graphs of convex sets for continuous-time temporal-logic path planning, but does not enforce that the planned trajectories are feasible for the underlying nonlinear dynamics.

In real-world dynamical systems, unmodeled dynamics and external perturbations are unavoidable, so STL control synthesis methods commonly treat uncertainty as unknown-but-bounded disturbances and reason about the worst case. Representative approaches include robust model predictive control (MPC) that propagates disturbance in robustness scores through linear system evolution~\cite{sadraddini2015robust}, interval semantics using Hamilton-Jacobi reachability~\cite{verhagen2024robust}, and learning-based methods using adversarial training~\cite{yaghoubi2019worst}.

In many practical settings, disturbances are better modeled as unbounded stochastic noise (e.g., sensor or actuator noise) rather than unknown-but-bounded inputs. Under such uncertainty, methods that focus on worst-case bounds become overly conservative, as they must account for extreme, low-probability events. We therefore adopt a probabilistic formulation, seeking to design controls that ensure the STL specification holds with high probability (e.g., $\ge 99.99\%$). While probabilistic approaches exist, they exhibit significant limitations when scaled to continuous-time or nonlinear systems. For instance, \cite{farahani2018shrinking, yang2023distributed} translate chance constraints onto individual predicates, but are restricted to discrete-time linear systems. Other works \cite{vlahakis2024probabilistic, ma2025verification} use probabilistic reachable sets to tighten predicates; however, this reduction relies on the union bound (Boole’s inequality), which breaks down in continuous time where STL satisfaction requires an intersection over uncountably many time instants. Recently, \cite{yao2025model} addressed continuous-time STL by utilizing the unscented transform to approximate uncertainty propagation. Because this is an approximation, it lacks formal probabilistic guarantees and is fundamentally limited to a restricted fragment of STL with affine predicates. Consequently, efficiently solving motion planning problems for continuous-time stochastic systems under general STL specifications remains a significant open challenge.

In this paper, we address this gap by proposing a systematic feedback motion planning framework that maps the original stochastic problem to a deterministic STL trajectory optimization (STLTO) problem with tightened STL formulas. The key idea is to construct an nominal trajectory, design a feedback tracking controller around it, and use a \emph{probabilistic reachable tube} (PRT) to bound the stochastic deviation from the nominal trajectory. This tube is then used to erode each predicate in the STL formula so that satisfaction of the tightened deterministic specification implies high-probability satisfaction of the original chance-constrained specification. Our contributions are summarized as follows.
\begin{itemize}
\item We introduce a \textbf{continuous-time STL erosion strategy} that systematically transforms chance-constrained STL trajectory optimization problems into deterministic ones.
\item We develop a \textbf{complete pipeline for feedback control synthesis under probabilistic STL specifications}, enabling practitioners to design feedback controllers for stochastic systems with rigorous probabilistic guarantees.
\item We demonstrate the scalability and applicability of our framework through extended experiments on various robotic systems in both simulation and real world, and show that it is less conservative and achieves higher specification satisfaction probability than baselines.
\end{itemize}

The rest of the paper is organized as follows. Section~\ref{sec: problem formulation} reviews the STL semantics used throughout the paper and formulates continuous-time stochastic STL trajectory optimization. Section~\ref{sec: STLTO} introduces the predicate-erosion and derives a deterministic surrogate of the chance-constrained problem. Section~\ref{sec: prob_bound} develops the contraction-based probabilistic reachable tube bound that underlies the erosion procedure. Section~\ref{sec: feedback control} presents the tracking-controller designs and the feedback motion planning pipeline. Section~\ref{sec:numerical_optimization} describes how to implement the numerical optimizations in practice. Section~\ref{sec:experiments} and Section~\ref{sec:experiments-quadruped} validates the proposed approach on simulated systems and quadrupedal robot experiments.

\textit{Notations.} We use $\real$ and $\real_{\ge 0}$ to denote the sets of real numbers and nonnegative real numbers, respectively. The notation $\real^n$ denotes the $n$-dimensional real vector space, and $\real^{m\times n}$ denotes the set of real $m\times n$ matrices. Bold symbols such as $\boldsymbol{x}$ and $\boldsymbol{X}$ denote trajectories or signals over time, while $x_t$ and $X_t$ denote their values at time $t$. Specifically, capitalized variables (such as $\boldsymbol{X}$ and $X_t$) represent stochastic trajectories and states, whereas lowercase variables (such as $\boldsymbol{x}$ and $x_t$) represent their deterministic counterparts. We use $I_n$ to denote the $n\times n$ identity matrix, and for symmetric matrices $A$ and $B$, $A\preceq B$ means that $B-A$ is positive semidefinite. $\mathbb{S}_{++}^n$ denotes the set of $n\times n$ symmetric positive definite matrices. For a vector $x\in\real^n$, $\|x\|$ denotes the Euclidean norm, and $\|x\|_{M}\defeq \sqrt{x^\top M x}$ denotes the weighted norm induced by a positive definite matrix $M$. Given sets $A$ and $B$, their Minkowski sum and Pontryagin difference are defined by $A\oplus B\defeq \{a+b:\ a\in A,\ b\in B\}$ and $A\ominus B\defeq \{x:\ x+B\subseteq A\}$, respectively. Finally, we use $\prob{\cdot}$ and $\mbE\{\cdot\}$ to denote probability and expectation.
\section{Problem Formulation}
\label{sec: problem formulation}

In this section, we first present the dynamic model and the syntax and semantics of STL. Then, we formulate the stochastic STL trajectory optimization problem.

\subsection{Dynamic Model}
We consider the continuous-time stochastic system
\begin{equation}
    \mathrm{d}X_t = f_t(X_t, u_t)\,\mathrm{d}t + g_t(X_t)\,\mathrm{d}W_t
    \label{eq: stochastic_system}
\end{equation}
where $X_t\in \real^n$ is the state at time $t$, $u_t \in \mathcal{U}$ is the control input applied at time $t$, $\mathcal{U} \subset \real^p$ is a bounded set of allowed control inputs, $f_t(X_t, u_t)\,\mathrm{d}t$ is the drift term with $f: \real^n\times\real^p\times\real_{\geq0}\to\real^n$, $g_t(X_t)\,\mathrm{d}W_t$ is the diffusion term, and $W_t \in \real^m$ is an $m$-dimensional Wiener process (Brownian motion). We impose standard Lipschitz continuity and linear growth conditions \cite[Theorem 5.2.1]{BO:13} to guarantee the existence of a solution to \eqref{eq: stochastic_system}. For the diffusion term, we assume that $g_t(X_t)$ is bounded for all $t$.
\begin{assumption}\label{as: boundness}
    For the stochastic system~\eqref{eq: stochastic_system}, there exists $\sigma>0$ such that $g_t(X)g_t(X)^{\top}\preceq \sigma^2 I_n$ for any $t\geq0$ and $X\in\R^n$.
\end{assumption}

We also introduce its deterministic counterpart. Consider the continuous-time deterministic system
\begin{equation}\label{eq: deterministic_system}
    \dot{x}_{t} = f_t(x_t, u_t),
\end{equation}
where \(f_t\) is identical to the function defined in \eqref{eq: stochastic_system}, then \eqref{eq: deterministic_system} can be regarded as the noise-free version of \eqref{eq: stochastic_system}. Let $\boldsymbol{u}: \real_{\geq 0} \rightarrow \real^p$ be an input signal to the system~\eqref{eq: deterministic_system}, the resulting trajectory of~\eqref{eq: deterministic_system} is a curve $\boldsymbol{x}: \real_{\geq 0}\rightarrow\real^n$. We use $\boldsymbol{x}(t) = x_t$ to denote the state reached at time $t$ on the trajectory.

\subsection{STL Syntax and Semantics}
We use STL~\cite{maler2004monitoring} to specify the desired spatio-temporal properties of the system trajectory $\boldsymbol{x}$. 
STL uses logical operators (negation $\neg$, conjunction $\wedge$, disjunction $\vee$) and temporal operators (\textit{always} $\Always_{[t_1,t_2]}$, \textit{eventually} $\Eventually_{[t_1,t_2]}$, \textit{until} $\Until_{[t_1,t_2]}$) to recursively define specifications. The syntax of an STL formula is given as
\begin{multline}
\label{eq: standard STL syntax}
    \varphi := \pi {}\mid{} \neg\varphi {}\mid{} \varphi_1\wedge\varphi_2 {}\mid{} \varphi_1\vee\varphi_2 {}\mid{} \\
  \Always_{[t_1,t_2]}\varphi {}\mid{} \Eventually_{[t_1,t_2]}\varphi\mid{} \varphi_1 \Until_{[t_1,t_2]}\varphi_2 {},
\end{multline}
where $\pi:=(\mu_\pi(x)\geq 0)$ is a predicate defined by a function $\mu_\pi: \real^n \rightarrow \real$, $\varphi, \varphi_1, \varphi_2$ are STL formulas, and $0\leq t_1\leq t_2<\infty$ define time intervals which the temporal operators consider. An STL formula $\varphi$ is recursively constructed using this syntax.

The Boolean semantics of an STL formula $\varphi$ are defined over a system trajectory $\boldsymbol{x}$~\cite{maler2004monitoring}, which we summarize in Definition~\ref{def: boolean-stl}.
\begin{definition}[Boolean semantics]\label{def: boolean-stl}
We write $(\boldsymbol{x},t)\models\varphi$ to denote that $\boldsymbol{x}$ satisfies $\varphi$ at time $t$.
Let $I=[t_1,t_2]\subseteq\real_{\ge0}$ and $t+I\defeq\{\,t+\tau:\tau\in I\,\}$.
The semantics are given recursively by
\begin{align*}
(\boldsymbol{x},t)\models \pi 
&\ \Leftrightarrow\ \mu_\pi(\boldsymbol{x}(t))\ge 0,\\
(\boldsymbol{x},t)\models \neg\varphi 
&\ \Leftrightarrow\ \neg\big((\boldsymbol{x},t)\models \varphi\big),\\
(\boldsymbol{x},t)\models \varphi_1\wedge\varphi_2
&\ \Leftrightarrow\ (\boldsymbol{x},t)\models \varphi_1\ \wedge\ (\boldsymbol{x},t)\models \varphi_2,\\
(\boldsymbol{x},t)\models \varphi_1\vee\varphi_2
&\ \Leftrightarrow\ (\boldsymbol{x},t)\models \varphi_1\ \vee\ (\boldsymbol{x},t)\models \varphi_2,\\
(\boldsymbol{x},t)\models \Always_{I}\varphi
&\ \Leftrightarrow\ \forall \tau\in t+I,\ (\boldsymbol{x},\tau)\models \varphi,\\
(\boldsymbol{x},t)\models \Eventually_{I}\varphi
&\ \Leftrightarrow\ \exists \tau\in t+I,\ (\boldsymbol{x},\tau)\models \varphi,\\
(\boldsymbol{x},t)\models \varphi_1\Until_{I}\varphi_2
&\ \Leftrightarrow\ \exists \tau\in t+I,\ (\boldsymbol{x},\tau)\models \varphi_2\ \wedge \\
&\qquad \qquad \qquad\forall \tau'\in[t,\tau],(\boldsymbol{x},\tau')\models \varphi_1.
\end{align*}
\end{definition}

STL admits a \emph{robustness (quantitative)} semantics that assigns a real score to each formula and trajectory segment~\cite{fainekos2009robustness}. 
Intuitively, a positive value of robustness indicates satisfaction, and a negative value indicates a violation, with the magnitude reflecting the margin of satisfaction/violation. The quantitative semantics are introduced in Definition~\ref{def: robustness-stl}.

\begin{definition}[Quantitative semantics]\label{def: robustness-stl}
The robustness of $\varphi$ with respect to $\boldsymbol{x}$ at time $t$ is a real value $\rho_\varphi(\boldsymbol{x},t)\in\real$ defined recursively:
\begin{align*}
\rho_{\pi}(\boldsymbol{x},t) 
&\defeq \mu_\pi(\boldsymbol{x}(t)),\\
\rho_{\neg\varphi}(\boldsymbol{x},t) 
&\defeq -\,\rho_{\varphi}(\boldsymbol{x},t),\\
\rho_{\varphi_1\wedge\varphi_2}(\boldsymbol{x},t) 
&\defeq \min\{\rho_{\varphi_1}(\boldsymbol{x},t),\ \rho_{\varphi_2}(\boldsymbol{x},t)\},\\
\rho_{\varphi_1\vee\varphi_2}(\boldsymbol{x},t) 
&\defeq \max\{\rho_{\varphi_1}(\boldsymbol{x},t),\ \rho_{\varphi_2}(\boldsymbol{x},t)\},\\
\rho_{\Always_{I}\varphi}(\boldsymbol{x},t) 
&\defeq \min_{\tau\in t+I}\ \rho_{\varphi}(\boldsymbol{x},\tau),\\
\rho_{\Eventually_{I}\varphi}(\boldsymbol{x},t) 
&\defeq \max_{\tau\in t+I}\ \rho_{\varphi}(\boldsymbol{x},\tau),\\
\rho_{\varphi_1\Until_{I}\varphi_2}(\boldsymbol{x},t) 
&\defeq \max_{\tau\in t+I}\ \min\!\Big\{\rho_{\varphi_2}(\boldsymbol{x},\tau),\ \min_{\tau'\in[t,\tau]}\rho_{\varphi_1}(\boldsymbol{x},\tau')\Big\}.
\end{align*}
Finally, Boolean satisfaction is recovered from the sign of robustness:
\[
(\boldsymbol{x},t)\models \varphi \ \Leftrightarrow\ \rho_\varphi(\boldsymbol{x},t)\ge 0.
\]
\end{definition}

For the ease of analysis, we assume that all STL formulas are converted into a negation-free form, i.e., without explicit negation operators. Such a conversion is always possible by first converting the formula into Negation Normal Form~\cite{fainekos2009robustness} and, if necessary, introducing new predicates with reversed inequalities~\cite[Proposition~2]{belta2019formal, sadraddini2015robust}. We also assume that the horizon of the formula is bounded. Specifically, the horizon of an STL formula refers to the minimum time duration over which a trajectory must be defined to fully evaluate the formula's satisfaction. This formula horizon must be shorter than the overall horizon of the motion planning problem.

\subsection{Stochastic STL Motion Planning Problem}
Given a dynamic system and a desired STL specification, the goal of motion planning is to derive a control input signal $\boldsymbol{u}$ which can result in a trajectory $\boldsymbol{x}$ satisfying the specification. This can be effectively formulated as a trajectory optimization problem, where the satisfaction of the STL specification is added as a constraint.

First consider the deterministic STLTO problem. Given a trajectory $\boldsymbol{x}$ of system~\eqref{eq: deterministic_system} with initial state $x_0$, we say $\boldsymbol{x}$ satisfies a STL specification $\varphi$ if $(\boldsymbol{x}, 0)\models \varphi$. The goal of deterministic STLTO is to derive a control input curve $\boldsymbol{u}$ which can minimize some cost while ensuring the constraint $(\boldsymbol{x}, 0)\models \varphi$ is satisfied~\cite{raman2014model, farahani2015robust}.

However, such a definition of specification satisfaction can be overly restrictive for a stochastic trajectory $\boldsymbol{X}$ of system~\eqref{eq: stochastic_system}. Since the stochastic noise can go unbounded, the stochastic state $X_t$ can be unbounded as well, which leads to inevitable violation of the STL specification. Therefore, we consider the probabilistic satisfaction of an STL specification in the stochastic trajectory optimization problem. Given a user-defined probability tolerance $\delta\in[0,1]$, a STL specification $\varphi$, we say the trajectory $\boldsymbol{X}$ controlled by $\boldsymbol{u}$ satisfies $\varphi$ with probability $1-\delta$ if
\begin{equation}\label{eq: stoch_stl}
    \prob{(\boldsymbol{X}, 0)\models \varphi} \geq 1-\delta,
\end{equation}
where $\delta$ is usually a small value. For example, if we choose $\delta=10^{-4}$, then the constraint requires that the resulting stochastic trajectory $\boldsymbol{X}$ satisfies $\varphi$ with probability at least $99.99\%$.

Given the stage cost function $\mathcal{L}_t(X_t, u_t)$, the terminal cost $\Phi_T(X_T)$, and the initial state $X_0 = x_0$, the stochastic trajectory optimization problem under STL specifications can be formulated as the following stochastic optimization problem with the chance constraint~\eqref{eq: stoch_stl}
\begin{subequations}\label{eq: stoch_STLTO}
    \begin{align}
        \min_{\bm{X,u}}\quad &\mbE\left\{\int_{0}^{T} \mL_t(X_{t},u_{t})\mathrm{d}t + \Phi_T(X_T)\right\}\\
        \mbox{s.t.}\quad &\mathrm{d}X_t = f_t(X_t, u_t)\mathrm{d}t + g_t(X_t)\mathrm{d}W_t, \\
        &\boldsymbol{X}(0)=x_0, \\
        &\prob{(\boldsymbol{X}, 0)\models \varphi} \geq 1-\delta, \label{eq: stoch_STLTO_STL_con}\\
        & u_t\in\mathcal{U}.
    \end{align}
\end{subequations}

To the best of our knowledge, no existing method systematically solves trajectory optimization for \emph{continuous-time} stochastic systems under \emph{general} STL specifications with explicit high-probability guarantees. The difficulty comes from the need to enforce a trajectory-level chance constraint over uncountably many time instants while simultaneously handling the logical and temporal structure of STL specifications. This motivates the following problem.

\begin{problem}[Chance-constrained STL trajectory optimization] \label{problem 1}
Given the stochastic STLTO in~\eqref{eq: stoch_STLTO} with risk level $\delta\in(0,1)$ and STL formula $\varphi$ over horizon $[0,T]$, develop a scalable algorithm that computes a control $\boldsymbol{u}$ while ensuring the chance constraint~\eqref{eq: stoch_STLTO_STL_con} holds.
\end{problem}

At a high level, our solution strategy is as follows. We first replace the original stochastic STL constraint by a deterministic one through \emph{predicate erosion}: each predicate in the STL formula is tightened by a margin large enough to absorb stochastic tracking error. To compute this margin, we associate each stochastic trajectories with their nominal deterministic trajectory and bound their deviation by a PRT. We then design a feedback tracking controller so that the closed-loop system is contracting, which allows us to derive a tight PRT. The next three sections develop these ingredients in order: predicate erosion and deterministic reduction in Section~\ref{sec: STLTO}, contraction-based PRT construction in Section~\ref{sec: prob_bound}, and feedback controller design in Section~\ref{sec: feedback control}.

\section{Stochastic Trajectory Optimization with STL Constraints}
\label{sec: STLTO}

The stochastic optimization problem~\eqref{eq: stoch_STLTO} is fundamentally challenging due to the presence of chance constraints. Even in the simpler case of safe-set constraints, where~\eqref{eq: stoch_STLTO_STL_con} is replaced by $\prob{X_t\in \mathcal{X}_{\text{safe}}, \forall t} \geq 1-\delta$ and there are no temporal requirements, ensuring continuous-time satisfaction remains challenging. Existing approaches for stochastic trajectory optimization under safe-set constraints typically seek to derive tractable \emph{deterministic} surrogates for the expectations and probability constraints~\cite{nakka2019trajectory,nakka2022trajectory,9993003, liu2025trajectory}. When STL specifications are introduced, this complexity is exacerbated due to the coupling of spatial and temporal requirements.

To effectively solve Problem~\ref{problem 1}, we introduce \emph{predicate erosion}, a systematic strategy that transforms the chance-constrained formulation~\eqref{eq: stoch_STLTO} into a tractable, deterministic trajectory optimization problem.

\subsection{Predicate Erosion Strategy}

We interpret the deterministic system~\eqref{eq: deterministic_system} as the nominal, noise-free counterpart to the stochastic dynamics~\eqref{eq: stochastic_system}. Given the same initial state and control curve $\boldsymbol{u}$, we refer to the resulting state trajectories of \eqref{eq: deterministic_system} and \eqref{eq: stochastic_system} as the \emph{nominal trajectory} $\boldsymbol{x}$ and the \emph{stochastic trajectory} $\boldsymbol{X}$, respectively. We say $\boldsymbol{x}$ and $\boldsymbol{X}$ are associated trajectories. The intuition underlying our approach is that a stochastic trajectory $\boldsymbol{X}$ will fluctuate around its deterministic counterpart $\boldsymbol{x}$ with high probability.

Building upon this intuition, we extend the \textit{predicate erosion} strategy~\cite{ma2025verification} to the continuous-time domain to efficiently handle the stochastic STLTO problem~\eqref{eq: stoch_STLTO}. Consider a predicate function $\mu_\pi(\cdot)$ defining a valid region via its superlevel set $\mC = \{x \in \real^n \mid \mu_\pi(x) \geq 0\}$. We construct a smaller set $\tilde{\mC} \subset \mC$ by systematically eroding the boundary of $\mC$. The core insight is that if the nominal state $x_t$ lies in the eroded set $\tilde{\mC}$, the corresponding stochastic state $X_t$ is guaranteed to reside within the original set $\mC$ with high probability.

This erosion mechanism establishes a tractable, deterministic proxy for the original stochastic STL constraint. Rather than directly evaluating intractable chance constraints over the stochastic trajectory, we constrain the nominal trajectory to satisfy a tightened STL specification, wherein every constituent predicate is replaced by its eroded counterpart. Consequently, if the deterministic trajectory satisfies this eroded specification, the stochastic trajectory will satisfy the original STL formula with the desired high probability. As intuitively illustrated in Fig.~\ref{fig:teaser}, this process translates to shrinking goal regions and expanding obstacle boundaries. By planning against this deterministic proxy, we systematically solve the stochastic STLTO problem while retaining rigorous probabilistic guarantees.

To formalize our method, we introduce the following definitions. Definition~\ref{def: stochastic deviation} states the stochastic deviation of the stochastic trajectory with respect to the deterministic trajectory.

\begin{definition}\label{def: stochastic deviation}
Let $\boldsymbol{X}$ and $\boldsymbol{x}$ denote the associated stochastic and deterministic trajectories in $\real^n$, respectively. 
The stochastic fluctuation at time $t$ is
\[
e_t \defeq X_t - x_t,
\]
and the stochastic deviation is the weighted norm of the fluctuation with a weight (metric) matrix $M_t\in\mathbb{S}_{++}^n$:
\[
\|e_t\| \defeq \|X_t - x_t\|_{M_t}.
\]
The fluctuation trajectory is $\boldsymbol{e} = \boldsymbol{X} - \boldsymbol{x}$.
\end{definition}

The reason why we consider a weighted norm $\|\cdot\|_{M_t}$ is that the standard Euclidean norm $\|\cdot\|$ may become conservative when the system is high-dimensional \cite{vershynin2018high}. The choice of $M_t$ will be discussed in Section \ref{sec: prob_bound}. We use \textit{probabilistic reachable tube (PRT)} to characterize the trajectory-level evolution of the stochastic fluctuation.

\begin{definition}\label{def: PRT}
Given associated trajectories $\boldsymbol{X}$ and $\boldsymbol{x}$, a finite horizon $[0,T]$, a probability level $\delta\in(0,1)$, a positive definite matrix $M_t\in\mathbb{S}_{++}^n$ and a radius curve $r_{\delta,t}:[0,T]\to\R_{\ge 0}$, define for each $t\in[0,T]$ the closed ellipsoid
\[
E_t \defeq \{y\in\real^n:\ \|y\|_{M_t}\le r_{\delta,t}\}.
\]
The indexed family $\boldsymbol{E} \defeq \{E_t\}_{t\in[0,T]}$ is a PRT for $e_t = X_t - x_t$ with probability $1-\delta$ if
\begin{equation}\label{eq: def PRT}
    \begin{split}
        &\prob{\boldsymbol{e}\in\boldsymbol{E}}\defeq~\prob{e_t\in E_t, \ \forall t\in[0,T]}\\
        =~&\prob{\|X_t-x_t\|_{M_t}\le r_{\delta,t}, \ \forall t\in[0,T]}\geq 1-\delta,
    \end{split}
\end{equation}
That is, with probability at least $1-\delta$, the entire fluctuation trajectory remains inside the tube $\boldsymbol{E}$ over $[0,T]$.
\end{definition}

\begin{remark}
    The construction of a tight PRT will be discussed in Section~\ref{sec: prob_bound}. In general, due to underactuation and limited control authority, the system trajectories may not be sufficiently contractive to suppress stochastic deviations, so disturbances can accumulate over time. Consequently, $r_{\delta,t}$ is typically a non-decreasing function of $t$, and $\boldsymbol{E}$ is an expanding tube.

\end{remark}

To derive the predicate-erosion strategy, we extend the predicates in the STL syntax~\eqref{eq: standard STL syntax} to \emph{time-varying} predicates, since $E_t$ (and thus $\mathcal{C}\ominus E_t$) depends explicitly on $t$.

\begin{definition}\label{def: tv predicate}
Let $\mu_\pi:\real^n\times[0,T]\to\real$ be a time-varying predicate function. 
The time-varying atomic predicate at time $t$ is
\[
\pi_t:\ (\mu_\pi(x,t)\ge 0).
\]
The time-varying STL syntax is obtained from the standard STL syntax~\eqref{eq: standard STL syntax} by replacing atomic predicates with $\pi_t$. 
Its Boolean semantics are defined accordingly by
\[
(\boldsymbol{x}, t)\models \pi_t \ \Leftrightarrow\ \mu_\pi(\boldsymbol{x}(t),t)\ge 0.
\]
\end{definition}

Using the PRT as Definition~\ref{def: PRT}, we can erode a predicate in the STL formula $\varphi$ to get a tightened and time-varying predicate.

\begin{proposition}[Predicate erosion]\label{prop: predicate erosion}
Let $\pi$ be a (time-invariant) predicate with superlevel set $\mathcal{C} \defeq \{x\in\real^n:\ \mu_\pi(x)\ge 0\}$. 
Given a PRT $\boldsymbol{E}=\{E_t\}_{t\in[0,T]}$, define the time-varying tightened predicate $\tilde\pi_t$ by
\[
(\boldsymbol{x}, t)\models \tilde\pi_t \ \Longleftrightarrow\ x_t\in \mathcal{C}\ominus E_t.
\]
Then, for every $t\in[0,T]$ and every $\boldsymbol{e}\in\boldsymbol{E}$,
\[
(\boldsymbol{x}, t)\models \tilde \pi_t \ \Longrightarrow\ (\boldsymbol{x}+\boldsymbol{e}, t)\models \pi.
\]
\end{proposition}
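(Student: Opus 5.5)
The argument is a direct unfolding of definitions, so I will simply chain the semantics of the tightened predicate, the definition of the Pontryagin difference, and the meaning of $\boldsymbol{e}\in\boldsymbol{E}$. Fix $t\in[0,T]$ and a fluctuation trajectory $\boldsymbol{e}$ with $\boldsymbol{e}\in\boldsymbol{E}$. By Definition~\ref{def: PRT}, this means $e_t\in E_t$ for every $t\in[0,T]$, and in particular at the fixed time $t$. Suppose $(\boldsymbol{x},t)\models\tilde\pi_t$. By the definition of $\tilde\pi_t$, this says $x_t\in\mathcal{C}\ominus E_t$.

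Next I invoke the definition of the Pontryagin difference from the Notations paragraph. Since $A\ominus B=\{x: x+B\subseteq A\}$, the membership $x_t\in\mathcal{C}\ominus E_t$ gives $x_t+E_t\subseteq\mathcal{C}$. Because $e_t\in E_t$, we get $x_t+e_t\in x_t+E_t\subseteq\mathcal{C}$, so $\mu_\pi(x_t+e_t)\ge 0$.

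Finally, I translate back using the Boolean semantics of Definition~\ref{def: boolean-stl}. The signal $\boldsymbol{x}+\boldsymbol{e}$ evaluated at time $t$ equals $x_t+e_t$, so $(\boldsymbol{x}+\boldsymbol{e},t)\models\pi$ holds if and only if $\mu_\pi(x_t+e_t)\ge0$. This was just established, which completes the implication.

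There is no real obstacle; the only point needing care is bookkeeping. The hypothesis $\boldsymbol{e}\in\boldsymbol{E}$ is a trajectory-level statement, and only its pointwise consequence $e_t\in E_t$ at the given $t$ is used. This pointwise form is exactly what lets the proposition later be lifted to whole formulas by structural induction, with the probability entering only through $\prob{\boldsymbol{e}\in\boldsymbol{E}}\ge1-\delta$. No property of $E_t$ beyond set membership is needed, so neither the ellipsoidal shape nor $M_t$ plays any role in this step.
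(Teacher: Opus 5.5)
Your proof is correct and follows the paper's argument essentially step for step. You fix $t$ and $\boldsymbol{e}\in\boldsymbol{E}$, use $e_t\in E_t$, unfold $x_t\in\mathcal{C}\ominus E_t$ into $x_t+E_t\subseteq\mathcal{C}$, and conclude $x_t+e_t\in\mathcal{C}$, which is exactly $(\boldsymbol{x}+\boldsymbol{e},t)\models\pi$.
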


\begin{proof}
Fix $t\in[0,T]$ and $\boldsymbol{e}\in\boldsymbol{E}$, so $e_t\in E_{t}$. 
If $(\boldsymbol{x}, t)\models \tilde\pi_t$, then by definition $x_t\in \mathcal{C}\ominus E_t$, i.e., $x_t\oplus E_t\subseteq \mathcal{C}$. 
Since $e_t\in E_t$, it follows that $x_t+e_t\in \mathcal{C}$, equivalently $(\boldsymbol{x}+\boldsymbol{e}, t)\models \pi$.
\end{proof}

\begin{remark}
In practice, one can use the largest $E_t$ to erode the predicate which results in a time-invariant $\tilde{\pi}$. This avoids the need to introduce time-varying predicates but may lead to slightly more conservative results (the eroded superlevel set becomes smaller). We use time-varying predicates in the theoretical results for completeness.
\end{remark}

Proposition~\ref{prop: predicate erosion} extends \cite[Proposition 1]{ma2025verification} to the continuous-time setting and to the time-varying predicates in Definition~\ref{def: tv predicate}. 
The next theorem shows that a continuous-time probabilistic STL constraint can be reduced to a deterministic one by eroding every predicate in the STL formula using Proposition~\ref{prop: predicate erosion}.

\begin{thm}[STL formula erosion]\label{thm: stl erosion}
Let $\boldsymbol{X}$ and $\boldsymbol{x}$ be associated stochastic and deterministic trajectories, and let $\boldsymbol{E}$ be a $1-\delta$ PRT as in Definition~\ref{def: PRT}. 
Consider an STL formula $\varphi$ constructed from predicates $\pi^1,\dots,\pi^m$ with superlevel sets $\mathcal{C}^1,\dots,\mathcal{C}^m$.
Construct the eroded formula $\tilde{\varphi}$ by replacing each atomic predicate $\pi_i$ with the time-varying predicate $\tilde{\pi}_{i,t}$ whose superlevel set is $\mathcal{C}^i\ominus E_t$, leaving all logical and temporal operators unchanged.
Then
\[
(\boldsymbol{x}, 0)\models \tilde{\varphi}\ \Longrightarrow\ \prob{(\boldsymbol{X}, 0)\models \varphi}\ \ge\ 1-\delta.
\]
\end{thm}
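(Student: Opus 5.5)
The plan is to reduce the probabilistic claim to a deterministic, pathwise implication and then invoke the PRT property. Concretely, I will show that for every fluctuation signal $\boldsymbol{e}$ with $e_t\in E_t$ for all $t\in[0,T]$, the following holds:
\[
(\boldsymbol{x},0)\models\tilde\varphi \ \Longrightarrow\ (\boldsymbol{x}+\boldsymbol{e},0)\models\varphi .
\]
Granting this, fix a sample path $\omega$ in the event $\{\boldsymbol{e}\in\boldsymbol{E}\}$. Since $\boldsymbol{X}(\omega)=\boldsymbol{x}+\boldsymbol{e}(\omega)$, the implication gives $(\boldsymbol{X}(\omega),0)\models\varphi$. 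Hence $\{\boldsymbol{e}\in\boldsymbol{E}\}\subseteq\{(\boldsymbol{X},0)\models\varphi\}$. Monotonicity of probability and \eqref{eq: def PRT} then yield $\prob{(\boldsymbol{X},0)\models\varphi}\ge\prob{\boldsymbol{e}\in\boldsymbol{E}}\ge1-\delta$.

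For the pathwise implication I will prove a stronger statement by structural induction on the negation-free formula. The claim is: for every subformula $\psi$ of $\varphi$, with eroded version $\tilde\psi$, every $t$ in the relevant range, and every $\boldsymbol{e}\in\boldsymbol{E}$,
\[
(\boldsymbol{x},t)\models\tilde\psi \ \Longrightarrow\ (\boldsymbol{x}+\boldsymbol{e},t)\models\psi .
\]
The cases proceed as follows.

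\textbf{Base case} ($\psi=\pi^i$). This is exactly Proposition~\ref{prop: predicate erosion}.

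\textbf{Conjunction and disjunction.} These follow directly from the Boolean semantics in Definition~\ref{def: boolean-stl}, applying the induction hypothesis to each operand at the same time $t$.

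\textbf{Temporal operators.}
\begin{itemize}
\item For $\Always_I$: each $\tau\in t+I$ satisfies $\tilde\psi$, so the hypothesis at $\tau$ gives $\psi$ for the perturbed signal.
\item For $\Eventually_I$: the hypothesis is applied at the witness $\tau$.
\item For $\Until_I$: the hypothesis is applied at the witness $\tau$ for $\tilde\psi_2$ and at every $\tau'\in[t,\tau]$ for $\tilde\psi_1$.
\end{itemize}
The crucial point is that the same witness times work for the perturbed signal, because the tube constraint $e_\tau\in E_\tau$ holds simultaneously for all $\tau$, not just at fixed times.

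The main subtlety, and the step I expect to need the most care, is the role of negation and of time ranges. Erosion is monotone only in the positive direction. Under negation, a tightened predicate would become a loosened one, and the induction would break. This is exactly why the standing negation-free assumption (negated predicates rewritten as new predicates with reversed inequalities, which are then eroded themselves) is essential, and I will state it explicitly in the induction.

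I also need every time $\tau$ at which a predicate is evaluated to lie in $[0,T]$, since the tube is only defined and guaranteed there. This follows from the assumption that the formula horizon is bounded by the planning horizon $T$.

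Finally, there is a measurability concern: the event $\{(\boldsymbol{X},0)\models\varphi\}$ should be an event. This can be sidestepped by phrasing the conclusion through the inclusion with the measurable event $\{\boldsymbol{e}\in\boldsymbol{E}\}$, interpreting the probability as an inner probability if needed. With continuous sample paths and continuous $\mu_\pi$, the set is in fact measurable.
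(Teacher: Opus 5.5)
Your proposal is correct and follows essentially the same route as the paper: you reduce to the pathwise implication for every $\boldsymbol{e}\in\boldsymbol{E}$, prove the stronger time-indexed claim by structural induction on the negation-free formula with Proposition~\ref{prop: predicate erosion} as the base case, and conclude via $\prob{\boldsymbol{e}\in\boldsymbol{E}}\ge 1-\delta$. Your explicit treatment of the event inclusion, of keeping all evaluation times inside $[0,T]$, and of measurability spells out points that the paper leaves implicit.
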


\begin{proof}
By the PRT definition~\eqref{eq: def PRT}, it suffices to prove the following deterministically: for every fluctuation trajectory $\boldsymbol{e}\in\boldsymbol{E}$,
\begin{equation*}\label{eq: erosion-goal}
(\boldsymbol{x}, 0)\models \tilde{\varphi}\ \Longrightarrow\ (\boldsymbol{x}+\boldsymbol{e}, 0)\models \varphi.
\end{equation*}

For convenience, we prove the following \emph{stronger} claim:
\begin{equation}\label{eq: erosion-strong}
(\boldsymbol{x}, t)\models \tilde{\varphi}\ \Longrightarrow(\boldsymbol{x}+\boldsymbol{e}, t)\models \varphi
,\ \forall t\in[0,T],\ \forall \boldsymbol{e}\in\boldsymbol{E}.
\end{equation}

Since the STL formulas are recursively defined, we prove \eqref{eq: erosion-strong} by structural induction on $\varphi$.
(We assume $\varphi$ is negation-free as stated in Section~\ref{sec: problem formulation}.)

\emph{Base case (atomic predicates).}
For any atomic $\pi_i$ and its erosion $\tilde{\pi}_{i,t}$, Proposition~\ref{prop: predicate erosion} yields
\[
(\boldsymbol{x}, t)\models \tilde{\pi}_{i,t} \ \Rightarrow\ (\boldsymbol{x}+\boldsymbol{e}, t)\models \pi_i
\quad \text{for all } t \text{ and all } \boldsymbol{e}\in\boldsymbol{E}.
\]

\emph{Induction step.} Assume \eqref{eq: erosion-strong} holds for $\varphi$ and $\psi$ and their eroded versions $\tilde{\varphi}$ and $\tilde{\psi}$. We verify it for compositions:

\begin{enumerate}
    \item {
        \noindent{Logical operators.}
        \begin{itemize}[leftmargin=0.5em]
        \item Conjunction: If $(\boldsymbol{x},t)\models \widetilde{(\varphi\wedge \psi)}$ then $(\boldsymbol{x},t)\models \tilde{\varphi}$ and $(\boldsymbol{x},t)\models \tilde{\psi}$. By the induction hypothesis (IH), $(\boldsymbol{x}+\boldsymbol{e},t)\models \varphi$ and $(\boldsymbol{x}+\boldsymbol{e},t)\models \psi$, hence $(\boldsymbol{x}+\boldsymbol{e},t)\models \varphi\wedge\psi$.
        \item Disjunction: If $(\boldsymbol{x},t)\models \widetilde{(\varphi\vee \psi)}$ then $(\boldsymbol{x},t)\models \tilde{\varphi}$ or $(\boldsymbol{x},t)\models \tilde{\psi}$. The IH implies the corresponding satisfaction for $(\boldsymbol{x}+\boldsymbol{e},t)$, hence $(\boldsymbol{x}+\boldsymbol{e},t)\models \varphi\vee\psi$.
        \end{itemize}
    }
    \item{
        \noindent{Temporal operators.}
        Let $I=[t_1,t_2]$ with $0\le t_1\le t_2\le T$.

        \begin{itemize}[leftmargin=0.5em]
        \item Always: If $(\boldsymbol{x},t)\models \widetilde{\Always_I \varphi}$, then for all $\tau\in t+I$, $(\boldsymbol{x},\tau)\models \tilde{\varphi}$. By IH, for all $\tau\in t+I$ and any $\boldsymbol{e}\in\boldsymbol{E}$, $(\boldsymbol{x}+\boldsymbol{e},\tau)\models \varphi$. Hence $(\boldsymbol{x}+\boldsymbol{e},t)\models \Always_I \varphi$.

        \item Eventually: If $(\boldsymbol{x},t)\models \widetilde{\Eventually_I \varphi}$, there exists $\tau\in t+I$ such that $(\boldsymbol{x},\tau)\models \tilde{\varphi}$. By IH, for the same $\tau$, $(\boldsymbol{x}+\boldsymbol{e},\tau)\models \varphi$, hence $(\boldsymbol{x}+\boldsymbol{e},t)\models \Eventually_I \varphi$.

        \item Until: If $(\boldsymbol{x},t)\models \widetilde{\varphi\Until_I \psi}$, there exists $\tau\in t+I$ such that $(\boldsymbol{x},\tau)\models \tilde{\psi}$ and, for all $\tau'\in[t,\tau]$, $(\boldsymbol{x},\tau')\models \tilde{\varphi}$. By IH, $(\boldsymbol{x}+\boldsymbol{e},\tau)\models \psi$ and $(\boldsymbol{x}+\boldsymbol{e},\tau')\models \varphi$ for all $\tau'\in[t,\tau]$. Hence $(\boldsymbol{x}+\boldsymbol{e},t)\models \varphi\Until_I \psi$.
        \end{itemize}
    }
\end{enumerate}

By induction, \eqref{eq: erosion-strong} holds for all subformulas and thus for $\varphi$. 
Finally, using $\prob{\boldsymbol{e}\in\boldsymbol{E}}\ge 1-\delta$ from Definition~\ref{def: PRT}, we obtain
$
(\boldsymbol{x},0)\models \tilde{\varphi}\ \Rightarrow\ \prob{(\boldsymbol{X},0)\models \varphi}\ \ge\ 1-\delta.
$
\end{proof}

At a high level, the STL erosion mechanism separates \emph{randomness} from \emph{logic}: 
the PRT as Definition~\ref{def: PRT} bounds the fluctuation $e_t=X_t-x_t$ in a time-varying tube $E_t$, while predicate erosion (Proposition~\ref{prop: predicate erosion}) shrinks each predicate set $\mathcal{C}_i$ to its robust counterpart $\mathcal{C}_i\ominus E_t$ so that satisfaction is preserved under any $e_t\in E_t$. By compositionality of STL semantics, this predicate-level robustness lifts to the full formula (Theorem~\ref{thm: stl erosion}), yielding the guarantee
$(\boldsymbol{x},0)\models \tilde{\varphi} \Rightarrow \prob{(\boldsymbol{X},0)\models \varphi}\ge 1-\delta$.

\subsection{Deterministic Surrogate of Stochastic STLTO}

The predicate erosion strategy indicates that the chance constraint in \eqref{eq: stoch_STLTO} can be enforced conservatively by requiring the associated \emph{deterministic} trajectory $\boldsymbol{x}$ to satisfy the eroded formula $\tilde{\varphi}$ under the deterministic dynamics. This observation leads to a deterministic surrogate of the stochastic STL trajectory optimization problem, obtained by replacing the chance constraint with $(\boldsymbol{x},0)\models\tilde{\varphi}$:
\begin{subequations}\label{eq: det_STLTO}
\begin{align}
\min_{\boldsymbol{x},\,\boldsymbol{u}} \quad 
& J_d(\boldsymbol{x},\boldsymbol{u}) \;=\; \int_{0}^{T} \mathcal{L}_t(x_t,u_t)\,\mathrm{d}t \;+\; \Phi_T(x_T) \\
\text{s.t.}\quad 
& \dot{x}_t \;=\; f_t(x_t,u_t), \\
& \boldsymbol{x}(0)=x_0, \\
& (\boldsymbol{x},0)\models \tilde{\varphi}, \label{eq: det stl constraint}\\
& u_t\in\mathcal{U}, \qquad \forall t\in[0,T].
\end{align}
\end{subequations}

Unlike the intractable stochastic formulation~\eqref{eq: stoch_STLTO}, the surrogate problem~\eqref{eq: det_STLTO} is deterministic and can be solved using standard trajectory optimization techniques~\cite{kelly2017introduction,gilpin2020smooth,han2025exactsmoothreformulationstrajectory}. Let $\{x_t^{*},u_t^{*}\}_{t\in[0,T]}$ denote an optimal solution to~\eqref{eq: det_STLTO}. However, applying the feedforward control $u_t^{*}$ to the stochastic system~\eqref{eq: stochastic_system} in a purely open-loop manner allows stochastic deviations to accumulate. This results in an excessively large PRT, which requires overly aggressive predicate erosion and renders the deterministic surrogate~\eqref{eq: det_STLTO} infeasible. To maintain a tight PRT and preserve feasibility, it is essential to employ a closed-loop feedback controller. Specifically, we augment the nominal control with a state-feedback tracking controller that restricts the deviation of $X_t$ from its nominal counterpart $x_t^{*}$. The composite control input applied to the stochastic system is thus formulated as:
\begin{equation} \label{eq: ut=K+u*}
\textstyle u_t= K_t(X_t;x_t^{*}, u_t^{*}) + u_t^{*},
\end{equation}
where $K_t(\cdot;x_t^{*}, u_t^{*}): \real^n\to\real^p$ is a tracking controller satisfying $K_t(x_t^{*};x_t^{*}, u_t^{*})=0$. The design of $K_t(\cdot;x_t^{*}, u_t^{*})$ is detailed in Section~\ref{sec: feedback control}.

Algorithmically, this establishes a two-stage synthesis framework. In the offline phase, we solve~\eqref{eq: det_STLTO} to obtain the nominal trajectory and feedforward control $\{x_t^{*},u_t^{*}\}_{t\in[0,T]}$. During the online execution phase, the actual stochastic system~\eqref{eq: stochastic_system} is driven by the composite policy~\eqref{eq: ut=K+u*}. Crucially, as we formally establish in Theorem~\ref{thm:det-surrogate}, $X_t$ and $x_t^{*}$ remain valid associated trajectories under this closed-loop scheme. Consequently, the predicate erosion guarantees hold, ensuring that the online stochastic system satisfies the original STL specification with the desired high probability.

\begin{thm}[Deterministic surrogate of stochastic STLTO]\label{thm:det-surrogate}
Consider the stochastic system~\eqref{eq: stochastic_system} and its associated deterministic system~\eqref{eq: deterministic_system} under Assumption~\ref{as: boundness}. 
Fix $\delta\in(0,1)$ and let $\boldsymbol{E}=\{E_t\}_{t\in[0,T]}$ be a $1-\delta$ PRT as in Definition~\ref{def: PRT}. 
Construct the eroded STL formula $\tilde{\varphi}$ as in Theorem \ref{thm: stl erosion}. 
Let $\{x_t^{*},u_t^{*}\}_{t\in[0,T]}$ be a feasible solution of \eqref{eq: det_STLTO}.
Then any $u_t= K_t(X_t;x_t^{*}, u_t^{*}) + u_t^{*}$ with $K_t(x_t^{*};x_t^{*}, u_t^{*})=0$ satisfies the chance constraint in the stochastic problem~\eqref{eq: stoch_STLTO}, i.e.,
\[
(\boldsymbol{x^{*}},0)\models \tilde{\varphi}
\quad\Longrightarrow\quad
\prob{(\boldsymbol{X},0)\models \varphi}\ge 1-\delta.
\]
Consequently, any feasible solution of \eqref{eq: det_STLTO} yields a control policy $\boldsymbol{u}$ as \eqref{eq: ut=K+u*} that is feasible for the stochastic STLTO~\eqref{eq: stoch_STLTO}.
\end{thm}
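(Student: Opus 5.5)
The plan is to reduce Theorem~\ref{thm:det-surrogate} directly to Theorem~\ref{thm: stl erosion}. The only new ingredient is the feedback term $K_t$. So the work is to check that, under the composite policy~\eqref{eq: ut=K+u*}, the closed-loop stochastic trajectory $\boldsymbol{X}$ and the nominal solution $\boldsymbol{x}^*$ form a pair of \emph{associated trajectories} to which the given $1-\delta$ PRT $\boldsymbol{E}$ applies.

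\emph{Step 1: identify the nominal system of the closed loop.} Under $u_t = K_t(X_t;x_t^*,u_t^*)+u_t^*$ the closed-loop dynamics read
\[
\dX_t = F_t(X_t)\,\dt + g_t(X_t)\,\dW_t, \qquad F_t(X) \defeq f_t\bigl(X, K_t(X;x_t^*,u_t^*)+u_t^*\bigr),
\]
with $X_0=x_0$. The noise-free version is $\dot{x}_t = F_t(x_t)$ with $x_0$ given. Since $K_t(x_t^*;x_t^*,u_t^*)=0$, we get $F_t(x_t^*) = f_t(x_t^*,u_t^*) = \dot x_t^*$ by feasibility of $\{x_t^*,u_t^*\}$ in~\eqref{eq: det_STLTO}. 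Hence $\boldsymbol{x}^*$ solves the noise-free closed-loop system with the same initial state. Assuming the standard Lipschitz conditions carry over to $F_t$ (which is needed for existence of $\boldsymbol{X}$ anyway), this solution is unique. Therefore $\boldsymbol{x}^*$ is exactly the deterministic counterpart of $\boldsymbol{X}$ in the sense of Section~\ref{sec: STLTO}: same initial state, same closed-loop law, noise removed. The PRT hypothesis is understood with respect to this pair. It then gives $\prob{\|X_t-x_t^*\|_{M_t}\le r_{\delta,t},\ \forall t\in[0,T]}\ge 1-\delta$, i.e.\ $\prob{\boldsymbol{X}-\boldsymbol{x}^*\in\boldsymbol{E}}\ge1-\delta$.

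\emph{Step 2: apply erosion.} Feasibility of $\boldsymbol{x}^*$ includes the constraint~\eqref{eq: det stl constraint}, i.e.\ $(\boldsymbol{x}^*,0)\models\tilde\varphi$. Theorem~\ref{thm: stl erosion}, or more precisely the deterministic statement~\eqref{eq: erosion-strong} established in its proof, implies $(\boldsymbol{x}^*+\boldsymbol{e},0)\models\varphi$ for every $\boldsymbol{e}\in\boldsymbol{E}$. Hence
\[
\{\boldsymbol{X}-\boldsymbol{x}^*\in\boldsymbol{E}\}\subseteq\{(\boldsymbol{X},0)\models\varphi\},
\]
and monotonicity of probability gives $\prob{(\boldsymbol{X},0)\models\varphi}\ge1-\delta$. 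The remaining constraints of~\eqref{eq: stoch_STLTO} (dynamics and initial condition) hold by construction. This yields the final feasibility claim, with input admissibility $u_t\in\mathcal{U}$ taken as part of the design of $K_t$, or absorbed into the assumptions.

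\emph{Main obstacle.} The substantive point is Step~1: making precise that the PRT in the hypothesis is the one for the \emph{closed-loop} fluctuation around $\boldsymbol{x}^*$, and not around some open-loop nominal trajectory. The identity $K_t(x_t^*;\cdot)=0$ is exactly what makes the two notions of nominal trajectory coincide. A minor technical check is measurability of the event $\{(\boldsymbol{X},0)\models\varphi\}$, since it involves uncountably many times. I would sidestep this by working with the measurable PRT event, which is contained in the satisfaction event, using path continuity of $\boldsymbol{X}$ if needed.
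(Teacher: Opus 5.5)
Your proposal is correct and follows essentially the same route as the paper. The paper's proof likewise defines the closed-loop transition $f^{cl}_t(x,u;x_t^*,u_t^*)=f_t(x,K_t(x;x_t^*,u_t^*)+u)$, uses $K_t(x_t^*;x_t^*,u_t^*)=0$ and $X_0=x_0^*$ to regard $\boldsymbol{X}$ and $\boldsymbol{x}^*$ as associated trajectories of this closed-loop system driven by $u_t^*$, and then invokes Theorem~\ref{thm: stl erosion}; your extra remarks on uniqueness of the nominal solution, measurability of the satisfaction event, and the unaddressed input constraint $u_t\in\mathcal{U}$ are points the paper passes over silently.
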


\begin{proof}[Proof]
Define $f_{t}^{cl}(x,u;x_t^{*}, u_t^{*})=f_t(x,K_t(x;x_t^{*}, u_t^{*})+u)$. The solution $\{x_t^{*},u_t^{*}\}_{t\in[0,T]}$ of \eqref{eq: det_STLTO} and the stochastic trajectory $X_t$ controlled by $u_t$ as \eqref{eq: ut=K+u*} can thus be expressed as:
\begin{equation} \label{eq: cl_system}
    \begin{split}
        \dot{x}^{*}_t&=f_{t}^{cl}(x_t^{*},u_t^{*};x_t^{*},u_t^{*}) \\
        \dX_t&=f_{t}^{cl}(X_t,u_t^{*};x_t^{*},u_t^{*})\dt+g_t(X_t)\dW_t
    \end{split}
\end{equation}
Since $x_0^{*}=X_0$ and the same $u_t^{*}$ is applied, we know $x_t^{*}$ and $X_t$ are associated trajectories with respect to the closed-loop transition function $f_{t}^{cl}$. Therefore, by Theorem~\ref{thm: stl erosion}, $(\boldsymbol{x^{*}},0)\models\tilde{\varphi}$ implies $(\boldsymbol{x^{*}}+\boldsymbol{e},0)\models\varphi$, which establishes the feasibility of the chance constraint  $\prob{(\boldsymbol{X},0)\models\varphi}\ge 1-\delta$. Consequently, $\{X_t, u_t\}$ forms a feasible solution of \eqref{eq: stoch_STLTO}.
\end{proof}

The feasibility of the predicate-erosion strategy depends on the tightness of the PRT radius profile $r_{\delta,t}$ as in Definition \ref{def: PRT}. If $r_{\delta,t}$ is overly conservative, then the PRT $E_t$ can be so large that makes $\mathcal{C}^i\ominus E_t$ very small or even empty, leading to an infeasible $\tilde{\varphi}$ in \eqref{eq: det_STLTO}. In the next section, we develop a tight PRT.

\newcommand{\upsig}{\overline{\sigma}}
\newcommand{\upr}{\overline{r}}
\section{A Tight Bound on PRT}
\label{sec: prob_bound}
In this section, we develop a tight bound $r_{\delta,t}$ on the PRT defined in Definition \ref{def: PRT}. 

\subsection{Contraction Theory}
Under the composite control law~\eqref{eq: ut=K+u*}, the feedback term $K(X_t;x_t^{*})$ suppresses the deviation between the stochastic state $X_t$ and the nominal state $x_t^{*}$. This closed-loop regulation tightens the resulting PRT. More specifically, the tube radius $r_{\delta,t}$, which establishes the probabilistic bound on the stochastic deviations, is fundamentally governed by the contraction properties of the closed-loop dynamics. To formalize and analyze this behavior for nonlinear systems, we leverage \textit{contraction theory}.

\begin{definition} [Contracting System] \label{def: contraction}
    A deterministic system formulated as \eqref{eq: deterministic_system} is said to be $c_t$-\textit{contracting} if for any two trajectories $x_t$, $y_t$ of \eqref{eq: deterministic_system}, $\exists c_t\in\R$ and $M_t\in \mathbb{S}_{++}^n$ such that:
        \begin{equation} \label{eq: contracting M}
    \frac{\dd}{\dt}\|x_t-y_t\|_{M_t}^2\leq 2c_t\|x_t-y_t\|_{M_t}^2
\end{equation}
holds at time $t\in[0,T]$. Here, $c_t$ is called the \textit{contraction rate} and $M_t$ is the called \textit{contraction metric}. In particular, the system is said to be strongly contracting if $c_t<0$ holds for all $t$.
\end{definition}

\begin{remark}
    While the most general, geometric formulation of contraction theory is defined on Riemannian manifolds~\cite{tsukamoto2021contraction}, we adopt a different formulation in Definition~\ref{def: contraction} that avoids involving Riemannian geometry. This setting is fully sufficient for the scope of this paper and avoids unnecessary complexity. For an extended theoretical analysis based on Riemannian manifolds, we refer the reader to~\cite{liu2026concentration}.
\end{remark}

The contracting systems have many intriguing properties. One property is that a contracting system with arbitrary contraction rates $c_t$ has a counterpart with contraction rate exactly equal to $0$. This property allows analysis on the special case $c_t\equiv0$ to be smoothly generalized to general cases, and will be useful in Section \ref{sec: prob_bound}.
\begin{lemma} \label{lemma: ct -> c=0}
    Let $x_t$ be a deterministic trajectory of \eqref{eq: deterministic_system}. Suppose that it is contracting with the rate $c_t$ and the metric $M_t$ as defined in Definition \ref{def: contraction}. Then for the trajectory $y_t=e^{\int_0^t c_\tau\dtau}x_t$, its dynamics is contracting with the rate $c_t^y\equiv0$ and the metric $M_t^y=M_t$.
\end{lemma}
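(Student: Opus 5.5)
The plan is to verify Definition~\ref{def: contraction} for the rescaled system directly. I take two arbitrary trajectories of the rescaled dynamics, write them as images of trajectories of \eqref{eq: deterministic_system}, differentiate the weighted squared distance with the product rule, and show that the exponential factor cancels the contraction rate exactly.

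\emph{Step 1 (the rescaled dynamics).} Let $s_t \defeq e^{\int_0^t c_\tau\dtau}>0$. Then $y_t=s_t x_t$ satisfies
\[
\dot y_t = c_t y_t + s_t f_t(s_t^{-1}y_t,u_t).
\]
Since $s_t>0$, the map $x\mapsto s_t x$ is a bijection between trajectories of \eqref{eq: deterministic_system} and trajectories of this $y$-system. Hence any two trajectories $y_t,y'_t$ of the $y$-system have the form $y_t=s_t x_t$ and $y'_t=s_t x'_t$, where $x_t,x'_t$ are trajectories of \eqref{eq: deterministic_system}. By hypothesis, the pair $x_t,x'_t$ satisfies the contraction inequality with rate $c_t$ and metric $M_t$.

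\emph{Step 2 (product rule).} Keeping the same metric $M^y_t=M_t$, we have $\|y_t-y'_t\|_{M_t}^2 = s_t^2\|x_t-x'_t\|_{M_t}^2$. Differentiating gives
\[
\frac{\dd}{\dt}\|y_t-y'_t\|_{M_t}^2 = 2c_t s_t^2\|x_t-x'_t\|_{M_t}^2 + s_t^2\frac{\dd}{\dt}\|x_t-x'_t\|_{M_t}^2 .
\]
Here the time derivative of $M_t$ is already contained in the hypothesis, because the contraction inequality is stated for the full derivative of the time-varying weighted norm.

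\emph{Step 3 (cancellation, and the main obstacle).} The delicate point is the sign convention for the rate in \eqref{eq: contracting M}. The stated factor $e^{+\int_0^t c_\tau\dtau}$ contributes $+2c_t s_t^2\|x_t-x'_t\|^2_{M_t}$. This term is cancelled exactly when the contraction hypothesis reads
\[
\frac{\dd}{\dt}\|x_t-x'_t\|_{M_t}^2\le -2c_t\|x_t-x'_t\|_{M_t}^2 ,
\]
that is, when $c_t$ is read as a decay rate. In that case the right-hand side of Step~2 is bounded by $(2c_t-2c_t)s_t^2\|x_t-x'_t\|^2_{M_t}=0$. This is precisely \eqref{eq: contracting M} for the $y$-system with $c^y_t\equiv 0$ and $M^y_t=M_t$.

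I will state this reading explicitly in the proof. The alternative reading, $\le +2c_t\|\cdot\|^2$ as written in Definition~\ref{def: contraction}, gives the mirror-image computation with the factor $e^{-\int_0^t c_\tau\dtau}$ instead of the stated one. I will therefore not claim that the stated factor yields rate $0$ under that literal reading. Everything else in the argument is routine, and no earlier result beyond the definition is needed.
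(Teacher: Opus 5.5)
Your argument is correct and is the standard direct verification; the paper gives no proof of its own and only cites \cite{szy2024TAC}. The sign inconsistency you flagged is real. Under the convention of Definition~\ref{def: contraction}, your Step~2 shows that the stated factor $e^{+\int_0^t c_\tau\dtau}$ yields only rate $2c_t$, which gives rate $0$ only when $c_t\le 0$, not for arbitrary $c_t$. The paper's own use of the lemma in the proof of Theorem~\ref{thm: prob_bound} rescales by $e^{-\psi_t}$ with $\psi_t=\int_0^t c_\tau\dtau$, which is exactly your mirror-image version, so that is the one to present as the main statement.
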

\begin{proof}
    See \cite{szy2024TAC}.
\end{proof}

For the closed-loop system~\eqref{eq: cl_system} under the control policy~\eqref{eq: ut=K+u*}, a contraction rate of $c_t \ge 0$ implies that the stochastic trajectory $X_t$ may locally diverge from the nominal trajectory $x_t^*$. Such divergence fundamentally undermines the objective of the tracking controller. Consequently, we require that the feedback policy $K(\cdot;x_t^{*})$ is appropriately designed to render the closed-loop dynamics $f_{cl}$ \emph{strongly contracting}.
\begin{assumption} \label{as: contraction}
    The closed-loop system \eqref{eq: cl_system} is strongly contracting with $c_t<0$ under the control policy \eqref{eq: ut=K+u*}.
\end{assumption}
We introduce the feedback controller design in Section \ref{sec: feedback control} to ensure that Assumption \ref{as: contraction} is satisfied.

Existing works based on the Incremental Stability Analysis \cite{pham2009contraction,dani2014observer,tsukamoto2020robust} have shown the strong relationship between the bound on the PRT and the contraction condition of the system. However, the bounds established in these existing works are overly conservative when the probability level $1-\delta$ is very high (e.g., 99.99\%), and are not on the trajectory level, thus cannot be utilized in the STL erosion strategy. Next, we develop a trajectory-level bound on the PRT that is tight even when $\delta$ is very small.

\subsection{Tight Bound on Probabilistic Reachable Tube} \label{sec: tight_bound}
Based on the contraction condition, we use a martingale-based method to derive a tight bound $r_{\delta,t}$ on the PRT in Definition \ref{def: PRT}.

The family of martingales can be treated as stochastic analogs of barrier functions\cite{steinhardt2012finite}, and is a common approach to bound the probability of a stochastic trajectory staying in a region \cite{BEUTLER1973464}.
Based on the $c$-martingale \cite{steinhardt2012finite} widely used in robotics, we introduce a novel generalization dubbed Affine martingale (AM).

\begin{definition} [Affine Martingale] \label{def: CT AM}
   For a continuous stochastic process $Y_t,~ t\in[0,T]$, a nonnegative differentiable function $B(Y,t):\R^n\times[0,T]\to\R_{\geq0}$ is said to be an affine martingale (AM) of $Y_t$ if there exist $a_t\in\R,b_t\in\R_{\geq0}$ such that for all $t\leq T$ and as $\dt\to 0$: $\frac{\expect{B(Y_{t+\dt},t+\dt)|Y_t}-B(Y_t,t)}{\dt}\leq a_tB(Y_t,t)+b_t.$
\end{definition}

When $a_t\equiv0$, the AM reduces to the classical $c$-martingale. This type of martingale first appeared in \cite[Chapter 3]{1967stochastic}, and Definition \ref{def: CT AM} slightly generalizes it to time-varying coefficients. Similar with existing martingale-based methods, one can utilize supermartingale theory to bound the probability of $\{Y_t\}_{t\in[0,T]}$ staying in certain sets. This is formalized in the following lemma and its proof can be found in \cite[Lemma 4.1]{liu2025safetyverificationnonlinearstochastic}.
\begin{lemma}\label{lemma: CT-AM}
    Consider a continuous-time stochastic trajectory $Y_t,~ t\in[0,T]$. Let $B(Y,t)$ be an AM of $Y_t$ with coefficients $a_t, b_t\geq0$.
    Define $\widetilde{B}(Y_t,t):=B(Y_t,t)\xi_t+\int_t^T b_\tau\xi_\tau \dd\tau$,
    where $\xi_t:=e^{\int_t^Ta_\tau\dd\tau}$. Then given any $\overline{B}>0$ and the set $\mathcal{Y}_t:=\{Y_t:~\widetilde{B}(Y_t,t)\leq \overline{B}\}$, it holds that $ \prob{Y_t\in\mathcal{Y}_t, \forall t\leq T}\geq 1-\frac{B(v_0,0)\xi_0+\int_0^Tb_\tau\psi_\tau\dd\tau}{\overline{B}}.$
\end{lemma}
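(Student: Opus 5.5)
The plan is to build a nonnegative supermartingale out of $\widetilde{B}$ and then apply Ville's maximal inequality for continuous nonnegative supermartingales. Here $v_0$ denotes the initial state $Y_0$. I read $\psi_\tau$ in the bound as $\xi_\tau$, so that the numerator is exactly $\widetilde{B}(Y_0,0)$.

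\textbf{Step 1: the drift of $\widetilde{B}$ is nonpositive.} Set $Z_t:=\widetilde{B}(Y_t,t)=B(Y_t,t)\xi_t+\int_t^T b_\tau\xi_\tau\,\dd\tau$. Since $\xi_t=e^{\int_t^T a_\tau\dd\tau}$, we have $\dot\xi_t=-a_t\xi_t$. Because $\xi_t$ is deterministic and $\xi_t>0$, the AM inequality in Definition~\ref{def: CT AM} gives, as $\dt\to0$,
\[
\frac{\expect{Z_{t+\dt}\mid Y_t}-Z_t}{\dt}\le \xi_t\big(a_tB+b_t\big)-a_t\xi_tB-b_t\xi_t=0 .
\]
So $Z_t$ has nonpositive infinitesimal conditional drift. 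It is also nonnegative, because $B\ge0$, $b_\tau\ge0$ and $\xi_\tau>0$.

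\textbf{Step 2: upgrade to a genuine supermartingale.} I would integrate the local inequality over $[s,t]$ to conclude $\expect{Z_t\mid \mathcal{F}_s}\le Z_s$. The cleanest route is a Dynkin-formula argument. Write $Z_t-Z_0=\int_0^t(\text{drift})\,\dd\tau+(\text{local martingale})$, where the drift is $\le0$ by Step~1. A nonnegative local martingale plus a nonincreasing process, bounded below by zero, is a supermartingale by Fatou's lemma applied along a localizing sequence of stopping times.

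This step is the main technical obstacle. The AM condition is stated only as a one-sided generator bound conditioned on $Y_t$, so passing to the filtration $\mathcal{F}_s$ needs the Markov property of $Y_t$, which holds for the SDE \eqref{eq: cl_system}. It also needs enough regularity of $B$ for It\^o/Dynkin to apply, and localization takes care of integrability.

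\textbf{Step 3: maximal inequality.} $Z_t$ is a nonnegative supermartingale with continuous paths, since $Y_t$ and $B$ are continuous. Ville's inequality, equivalently Doob's maximal inequality for supermartingales, therefore gives
\[
\prob{\sup_{t\le T}Z_t>\overline{B}}\le \frac{\expect{Z_0}}{\overline{B}}=\frac{B(v_0,0)\xi_0+\int_0^T b_\tau\xi_\tau\dd\tau}{\overline{B}} .
\]
The event $\{\sup_{t\le T} Z_t\le \overline{B}\}$ is exactly $\{Y_t\in\mathcal{Y}_t,\ \forall t\le T\}$. Taking complements yields the claimed bound.
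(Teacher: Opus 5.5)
Your proposal is correct and follows the same argument the paper relies on; the paper does not prove the lemma itself but defers to \cite[Lemma 4.1]{liu2025safetyverificationnonlinearstochastic}. There, $\dot\xi_t=-a_t\xi_t$ and the term $\int_t^T b_\tau\xi_\tau\,\mathrm{d}\tau$ cancel the affine drift bound, making $\widetilde{B}(Y_t,t)$ a nonnegative supermartingale, and Ville's maximal inequality gives the bound with numerator $\widetilde{B}(Y_0,0)$. Your readings $v_0=Y_0$ and $\psi_\tau\to\xi_\tau$ correctly fix typos in the statement; in Step 2, the fact you actually use is that a nonnegative local supermartingale is a supermartingale, since it is $Z_t$, not its local-martingale part, that is nonnegative.
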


One may wonder how to construct an AM that can probabilistically bound the fluctuation of the stochastic trajectory. The answer lies in a novel function named Averaged Moment Generating Function (AMGF), which was initially proposed in \cite{altschuler2022concentration} and fully exploited in \cite{szy2024TAC,liu2025safetyverificationnonlinearstochastic}. The definition of AMGF is as follows:
\begin{definition} \label{def: AMGF}
    Given $X\in\R^n$, the averaged moment generating function (AMGF) is defined as 
    \begin{equation}
        \textstyle \mbE_X\Phi(X)=\mbE_X\mathbb{E}_{\ell\in\mS^{n-1}}e^{\lambda\innerp{\ell,X}},
    \end{equation}
    where $\mS^{n-1}=\{x\in\R^n: \|x\|=1\}$ is the unit sphere in the $\R^n$ space, and $\Phi(X)=\mathbb{E}_{\ell\in\mS^{n-1}}e^{\lambda\innerp{\ell,X}}$ is defined as the kernel of the AMGF. Moreover, given $M\in\mathbb{S}^{n}_{++}$, the weighted version of $\Phi(X)$ is defined as $\Phi_M(X)=\Phi(M^{1/2}X)$.
\end{definition}

The kernel of weighted AMGF has many intriguing properties. The following lemma lists some of them that are useful in this paper. Its proof directly follows \cite{liu2025safetyverificationnonlinearstochastic}.
\begin{lemma} \label{lemma: M-AMGF}
    Consider the function $\Phi_M(X)$ in Definition \ref{def: AMGF}, where $X\in\R^n$ and $M\in\mathbb{S}^{n}_{++}$, then it holds that: 
    \begin{enumerate}
    \item The value of $\Phi_M(X)$ only depends on $\|X\|_M$.
    \item $\Phi_{M_1}(X_1)\leq\Phi_{M_2}(X_2)$ if $\|X_1\|_{M_1}\leq \|X_2\|_{M_2}$.
    \item  Given any $\varepsilon\in(0,1)$, $\Phi_M(X)\geq (1-\varepsilon^2)^{\frac{n}{2}}e^{\varepsilon \|\lambda X\|_M}.$
\end{enumerate}
\end{lemma}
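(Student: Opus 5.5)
The plan is to reduce all three claims to explicit facts about the unweighted kernel $\Phi(Z)=\mathbb{E}_{\ell\in\mS^{n-1}}e^{\lambda\innerp{\ell,Z}}$, using the substitution $Z=M^{1/2}X$. Since $\|Z\|=\|X\|_M$, any statement about $\Phi$ as a function of $\|Z\|$ transfers directly to $\Phi_M$ as a function of $\|X\|_M$.

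\emph{Item 1.} I would use rotation invariance of the uniform measure on $\mS^{n-1}$. For any orthogonal $Q$, the vector $Q^\top\ell$ is again uniform on the sphere. Hence $\Phi(QZ)=\mathbb{E}_\ell e^{\lambda\innerp{Q^\top\ell,Z}}=\Phi(Z)$. Any two vectors of equal Euclidean norm are related by an orthogonal map, so $\Phi(Z)$ depends only on $\|Z\|$. Writing $\Phi(Z)=h(\|Z\|)$, we get $\Phi_M(X)=h(\|X\|_M)$, which proves the claim.

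\emph{Item 2.} It then suffices to show that $h$ is nondecreasing on $[0,\infty)$. Fix a unit vector $v$ and write $h(s)=\mathbb{E}_\ell e^{\lambda s\,\ell_1}$, where $\ell_1=\innerp{\ell,v}$. The law of $\ell_1$ is symmetric about $0$, so
\[
h(s)=\mathbb{E}_\ell\cosh(\lambda s\,\ell_1).
\]
For each fixed $\ell$, the map $s\mapsto\cosh(\lambda s\ell_1)$ is nondecreasing for $s\ge0$. Taking expectations, $h$ is nondecreasing, and item 2 follows from item 1.

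\emph{Item 3.} This is the only nontrivial step. I would lower bound the sphere average by restricting it to a spherical cap. Let $u=Z/\|Z\|$; the case $Z=0$ is immediate, since then both sides are at most $1=\Phi(0)$. Since the integrand is positive,
\[
\Phi(Z)\ \ge\ \mathbb{P}_\ell\big(\innerp{\ell,u}\ge\varepsilon\big)\,e^{\lambda\varepsilon\|Z\|}.
\]
It then remains to show the cap-measure bound $\mathbb{P}(\innerp{\ell,u}\ge\varepsilon)\ge(1-\varepsilon^2)^{n/2}$.

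This cap estimate is the main obstacle, and I would attack it in one of two ways. The first is to write the density of $\ell_1$ as proportional to $(1-s^2)^{(n-3)/2}$ on $[-1,1]$ and compare the integrals over $[\varepsilon,1]$ and $[0,1]$. The second, which I expect to be cleaner, is to avoid cap bounds altogether. Use symmetry to write $\Phi(Z)=\mathbb{E}\cosh(\lambda\|Z\|\ell_1)$, represent $\ell=G/\|G\|$ with $G$ a standard Gaussian vector, and compare with the Gaussian MGF via a change of measure. That is, tilt the Gaussian by $e^{\varepsilon\innerp{\cdot}}$ and account for the normalization, which yields the factor $(1-\varepsilon^2)^{n/2}$. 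This is the route taken in \cite{altschuler2022concentration}.

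If neither computation closes cleanly, I would invoke the corresponding lemma in \cite{altschuler2022concentration,liu2025safetyverificationnonlinearstochastic} for the unweighted kernel. Item 3 for $\Phi_M$ then follows by substituting $Z=M^{1/2}X$, since $\|\lambda Z\|=\|\lambda X\|_M$.
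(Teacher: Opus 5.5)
The gap is in item 3. The inequality you reduce it to is false, so your proposed argument for item 3 cannot work. Items 1 and 2 are correct: rotation invariance of the uniform measure, then the $\cosh$ symmetrization, are exactly what is needed. The paper gives no argument of its own and simply defers to \cite{liu2025safetyverificationnonlinearstochastic}.

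You reduce item 3 to $\mathbb{P}(\innerp{\ell,u}\ge\varepsilon)\ge(1-\varepsilon^2)^{n/2}$, which fails. By symmetry the cap has measure at most $1/2$ for every $\varepsilon\ge 0$, while $(1-\varepsilon^2)^{n/2}\to 1$ as $\varepsilon\to 0$. Concretely, for $n=3$ the coordinate $\innerp{\ell,u}$ is uniform on $[-1,1]$, so at $\varepsilon=1/2$ the cap has measure $1/4$, against $(3/4)^{3/2}\approx 0.65$. For fixed $\varepsilon$ and large $n$ the cap measure is of order $(1-\varepsilon^2)^{(n-1)/2}/(\varepsilon\sqrt{n})$, which is again eventually too small.

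No other cap rescues this. For $0<a:=\lambda\|Z\|<\log 2$, every bound $\mathbb{P}(\innerp{\ell,u}\ge s)\,e^{sa}$ is strictly below $1$. Yet $(1-\varepsilon^2)^{n/2}e^{\varepsilon a}>1$ for all sufficiently small $\varepsilon>0$. Discarding the rest of the sphere is simply too lossy.

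Your Gaussian change-of-measure alternative is only named, not carried out. A linear tilt $e^{\innerp{\theta,x}}$ of the Gaussian has normalization $e^{\|\theta\|^2/2}$, not $(1-\varepsilon^2)^{-n/2}$. The latter equals $\mathbb{E}\,e^{\varepsilon^2\|G\|^2/2}$ for standard Gaussian $G$, so it comes from a quadratic weight. So as written, item 3 rests entirely on the citation, not on your argument.

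Here is a self-contained proof by ODE comparison.
\begin{itemize}
\item Put $h(a)=\mathbb{E}_\ell e^{a\innerp{\ell,u}}$. By rotation invariance and symmetry of $\innerp{\ell,u}$, $\Phi_M(X)=h(|\lambda|\,\|X\|_M)$.
\item Because $\|\ell\|=1$, the radial function $v\mapsto\mathbb{E}_\ell e^{\innerp{\ell,v}}$ equals its own Laplacian. Hence $h''+\frac{n-1}{a}h'=h$, and $y=(\log h)'$ solves $y'=1-y^2-\frac{n-1}{a}y$ with $y(0)=0$.
\item Let $\phi(a)=\sup_{\varepsilon\in(0,1)}[\varepsilon a+\frac{n}{2}\log(1-\varepsilon^2)]$. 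Then $\phi(0)=0$ and $\phi'=z$, where the maximizer $z(a)=\frac{\sqrt{n^2+4a^2}-n}{2a}$ satisfies $1-z^2=\frac{n}{a}z$.
\item At any point where $y=z$, this gives $y'=z/a$. But $z/a=2/(\sqrt{n^2+4a^2}+n)$ is decreasing, i.e.\ $z'<z/a$, so $y$ can never cross $z$ from above.
\item Near $0$, $y=\frac{a}{n}-\frac{a^3}{n^2(n+2)}+O(a^5)$ and $z=\frac{a}{n}-\frac{a^3}{n^3}+O(a^5)$, so $y>z$ initially. Hence $y\ge z$ on $(0,\infty)$.
\item Integrating from $0$ gives $\log h(a)\ge\phi(a)\ge\varepsilon a+\frac{n}{2}\log(1-\varepsilon^2)$ for every $\varepsilon\in(0,1)$. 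Taking $a=|\lambda|\,\|X\|_M=\|\lambda X\|_M$ yields item 3.
\end{itemize}
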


Equipped with AMGF, we are able to construct AM for nonlinear stochastic systems, and derive a tight bound for the PRT. The bound is demonstrated in the following lemma.

\begin{thm} 
    \label{thm: prob_bound}
    Consider a trajectory $X_t$ of the closed-loop stochastic system \eqref{eq: cl_system} with the control policy \eqref{eq: ut=K+u*}, 
    and its associated nominal trajectory $\{x_t^*,u_t^*\}_{t\in[0,T]}$ as a feasible solution of \eqref{eq: det_STLTO}. 
    Suppose that Assumptions \ref{as: boundness} and \ref{as: contraction} hold for the closed-loop system \eqref{eq: cl_system}.
    Given the terminal time $T$, $\delta\in(0,1)$, $\varepsilon\in(0,1)$ and $\Delta t>0$, define $\overline{\sigma}_t=\sqrt{\|M_t\|}\sigma$ where $\sigma$ is as Assumption \ref{as: boundness}, $k=\lceil \frac{t}{\Delta t}\rceil$, $\psi_t=\int_0^t c_\tau \dtau$ where $c_\tau$ is the contraction rate introduced in Definition~\ref{def: contraction}, $\Psi_t=\int_0^t \upsig_\tau^2e^{-2\psi_\tau}\dtau$, $\Psi_{t}^{\Delta t}=\int_{k\Delta t}^{(k+1)\Delta t} \upsig_\tau^2e^{-2\psi_\tau}\dtau$, $\varepsilon_1=\frac{\log(\frac{1}{1-\varepsilon^2})}{\varepsilon^2},~ \varepsilon_2=\frac{2}{\varepsilon^2}$, and:
     \begin{equation}\label{eq: r traj}
        r_{\delta,t}=
        (\sqrt{e^{2\psi_t}\Psi_t}+\sqrt{\Psi_{t}^{\Delta t}})\sqrt{\varepsilon_1n+\varepsilon_2\log\frac{2T}{\delta \Delta t}}.
    \end{equation}
     Then it holds that
    \begin{equation}
        \prob{\|X_t-x_t\|_{M_t}\leq r_{\delta,t}, \forall t\leq T}\geq 1-\delta.
    \end{equation}
\end{thm}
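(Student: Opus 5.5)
We will prove the bound by building an affine martingale from the weighted AMGF of a time-rescaled fluctuation, applying Lemma~\ref{lemma: CT-AM} on each subinterval $[k\Delta t,(k+1)\Delta t]$, and combining the $T/\Delta t$ intervals with a union bound. The $\log\frac{2T}{\delta\Delta t}$ term in \eqref{eq: r traj} suggests exactly this split: $T/\Delta t$ subintervals, each allotted failure probability $\delta\Delta t/(2T)$. The factor $2$ is absorbed by the $\varepsilon_2$ normalization.

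\textbf{Step 1 (reduction to zero contraction rate).} By Lemma~\ref{lemma: ct -> c=0}, we rescale: set $\tilde{X}_t=e^{-\psi_t}X_t$ and $\tilde{x}_t=e^{-\psi_t}x_t^*$, so that $\tilde{e}_t=e^{-\psi_t}e_t$. The rescaled closed-loop drift is contracting with rate $0$ in the same metric $M_t$. The diffusion becomes $e^{-\psi_t}g_t$, and by Assumption~\ref{as: boundness} its $M_t$-weighted intensity is at most $\overline{\sigma}_t^2e^{-2\psi_t}$. This is the integrand of $\Psi_t$.

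\textbf{Step 2 (AMGF is an affine martingale).} Take $B(\tilde e,t)=\Phi_{M_t}(\tilde e)$ with parameter $\lambda$. Apply It\^o's formula to $e^{\lambda\langle\ell,M_t^{1/2}\tilde e_t\rangle}$ and average over $\ell\in\mathcal{S}^{n-1}$. The drift part is controlled by contraction, condition \eqref{eq: contracting M} with rate $0$: since $\Phi_M$ is a radially nondecreasing function of $\|\tilde e\|_{M_t}$ (Lemma~\ref{lemma: M-AMGF}), a nonincreasing weighted norm means the first-order term is nonpositive. The second-order It\^o term is bounded by $\frac{\lambda^2}{2}\overline{\sigma}_t^2e^{-2\psi_t}\Phi_{M_t}(\tilde e)$, using $\mathbb{E}_\ell\,\ell\ell^\top=I/n$ together with a trace bound.

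This gives an AM with $a_t=\frac{\lambda^2}{2}\overline{\sigma}_t^2e^{-2\psi_t}$ and $b_t=0$. We expect this to be the main obstacle. A naive Hessian bound would be too loose. We would first try to show that the Hessian of $\Phi$ in the radial and tangential directions is dominated by $\lambda^2\Phi$, and then use $g g^\top\preceq\sigma^2I$. A further difficulty is that $\Phi_{M_t}$ has time-varying $M_t$: the $\dot M_t$ contribution must be absorbed into the contraction inequality, which is stated for $\|\cdot\|^2_{M_t}$ including its time derivative.

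\textbf{Step 3 (per-interval tail bound).} Apply Lemma~\ref{lemma: CT-AM} on the interval $[0,(k+1)\Delta t]$ with $\xi_t=\exp\!\big(\tfrac{\lambda^2}{2}\int_t^{(k+1)\Delta t}\overline{\sigma}_\tau^2e^{-2\psi_\tau}d\tau\big)$ and $B(\tilde e_0,0)=1$ (since $\tilde e_0=0$). Take the threshold $\overline B=\frac{2T}{\delta\Delta t}\xi_0$. On the good event, for $t$ in the $k$-th subinterval, $\Phi_{M_t}(\tilde e_t)\le \frac{2T}{\delta\Delta t}\exp\!\big(\tfrac{\lambda^2}{2}(\Psi_t+\Psi_t^{\Delta t})\big)$, where the piece of the integral beyond $t$ is bounded by $\Psi_t^{\Delta t}$.

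Lemma~\ref{lemma: M-AMGF}(3) lower-bounds the left side by $(1-\varepsilon^2)^{n/2}e^{\varepsilon\lambda\|\tilde e_t\|_{M_t}}$. Taking logs gives
\[
\varepsilon\|\tilde e_t\|_{M_t}\le \tfrac{\lambda}{2}(\Psi_t+\Psi_t^{\Delta t})+\tfrac{1}{\lambda}\Big(\tfrac{n}{2}\log\tfrac{1}{1-\varepsilon^2}+\log\tfrac{2T}{\delta\Delta t}\Big).
\]
We then optimize over $\lambda$, using the rescaled variance, and undo the scaling $e_t=e^{\psi_t}\tilde e_t$. Using $\sqrt{a+b}\le\sqrt a+\sqrt b$ to split the two variance pieces produces the form $(\sqrt{e^{2\psi_t}\Psi_t}+\sqrt{\Psi^{\Delta t}_t})\sqrt{\varepsilon_1 n+\varepsilon_2\log\frac{2T}{\delta\Delta t}}$. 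Since $e^{\psi}\le1$ when $c<0$, the scaling factor can be dropped on the second term.

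\textbf{Step 4 (union bound).} Sum the $\lceil T/\Delta t\rceil$ failure probabilities, each at most $\delta\Delta t/(2T)$. The total is at most $\delta$, which yields the trajectory-level statement for all $t\le T$.
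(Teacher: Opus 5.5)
Your proposal is correct, and it reaches the bound by a genuinely different refinement step than the paper's.

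\textbf{What is the same.} Steps 1--2 match the paper: rescaling by $e^{-\psi_t}$, then the weighted AMGF as an affine martingale with $a_t=\frac{\lambda^2}{2}\overline{\sigma}_t^2e^{-2\psi_t}$ and $b_t=0$. The second-order It\^o term is not really an obstacle. Because $\nabla^2\Phi\succeq 0$ and $\mathrm{tr}(\ell\ell^\top)=1$, you get $\langle\nabla^2\Phi,G_t\rangle\le\|G_t\|\,\mathrm{tr}(\nabla^2\Phi)=\lambda^2\|G_t\|\Phi$. Your fallback of bounding $\nabla^2\Phi\preceq\lambda^2\Phi I$ and then using $gg^\top\preceq\sigma^2I$ would cost a factor of $n$.

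\textbf{How the paper refines.} It first proves the global bound $e^{\psi_t}\sqrt{\Psi_T(\varepsilon_1n+\varepsilon_2\log(1/\delta))}$ with a single martingale on $[0,T]$. It then sharpens this by combining three ingredients:
\begin{enumerate}
\item a single-time concentration bound at each grid point $k\Delta t$, imported from prior work;
\item a short-horizon trajectory bound on $X_t-y^{(k)}_t$, where $y^{(k)}$ is the deterministic closed-loop flow restarted at $X_{k\Delta t}$;
\item the triangle inequality plus contraction, which carries the grid-point error forward.
\end{enumerate}
The union bound over these $2N$ events is where the $2$ in $\log\frac{2T}{\delta\Delta t}$ comes from.

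\textbf{How you refine.} You run $N$ copies of the same time-$0$ supermartingale on horizons $[0,(k+1)\Delta t]$, each with its own $\lambda_k$, and take a union over only $N$ events.

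\textbf{What each route buys.} Your route is self-contained: no external single-time result, no restarting the SDE at the random state $X_{k\Delta t}$, and no auxiliary trajectories. The factor $2$ is pure slack for you; your total failure probability is at most $\delta/2$ when $T/\Delta t$ is an integer. It is not ``absorbed by $\varepsilon_2$''. Before the final relaxation, your route gives
$\|e_t\|_{M_t}\le e^{\psi_t}\sqrt{\Psi_{(k+1)\Delta t}}\sqrt{\varepsilon_1n+\varepsilon_2\log\frac{2T}{\delta\Delta t}}$, which is at least as tight as the stated two-term radius. The paper's route gives an interpretable decomposition, contracted grid-point error plus fresh within-interval noise, that mirrors the two summands of $r_{\delta,t}$.

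\textbf{One step to state carefully: the choice of $\lambda$.} $\lambda$ must be fixed per subinterval before the event is defined, not optimized pointwise in $t$. Otherwise each $t$ gets its own event, and your $N$-fold union bound no longer covers them. On the good event for index $k$, you have uniformly on $[k\Delta t,(k+1)\Delta t]$ that $\varepsilon\|\tilde e_t\|_{M_t}\le\frac{\lambda}{2}\Psi_{(k+1)\Delta t}+\frac{L}{\lambda}$, with $L=\frac n2\log\frac{1}{1-\varepsilon^2}+\log\frac{2T}{\delta\Delta t}$.
\begin{itemize}
\item Choose $\lambda_k=\sqrt{2L/\Psi_{(k+1)\Delta t}}$, a constant depending only on $k$.
\item Only afterwards relax $\Psi_{(k+1)\Delta t}\le\Psi_t+\Psi_t^{\Delta t}$.
\end{itemize}
If you relax first and then optimize against the supremum of $\Psi_t+\Psi_t^{\Delta t}$ over the subinterval, you get $\Psi_{k\Delta t}+2\Psi^{\Delta t}$ and lose a factor of $\sqrt{2}$ on the second term.

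\textbf{Indexing.} As in the paper's own proof, $k$ must index the subinterval containing $t$, i.e.\ $\lfloor t/\Delta t\rfloor$, so that $\int_t^{(k+1)\Delta t}\overline{\sigma}_\tau^2e^{-2\psi_\tau}\,d\tau\le\Psi_t^{\Delta t}$. The statement's $\lceil t/\Delta t\rceil$ is inconsistent with this.
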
 

\begin{proof}
We start with the special case where $c_t=0$. For associated trajectories $X_t$ and $x_t$, recall $e_t=X_t-x_t$, define $\hat{e}_t=M_t^{1/2}e_t$, $\beta_t=f_{cl}(X_t,u_t^{*},t;x_t^{*})-f_{cl}(x_t^{*},u_t^{*},t;x_t^{*})$, $V_t=\Phi_{M_t}(e_t)$ and $\mA(V_t)=\frac{\mbE(V_{t+\dt}|V_t)-V_t}{\dt}$. Obviously, $\dd e_t=\beta_t\dt+g_t(X_t)\dW_t$. Then, by Ito's Lemma, the evolution of $V_t$ can be extracted as
\begin{equation} \label{eq: dv=Z1+Z2}
    \begin{split}
        \textstyle
    &\mA(V_t)= Z_1+Z_2,~ \text{where:} \\
    &Z_1=\innerp{\frac{\partial V_t}{\partial \hat{S}_t},\frac{\dd M_t^{1/2}}{\dt}e_t+M^{1/2}_t\beta_t}  \\
    &Z_2=\frac{1}{2}\innerp{\nabla^2V_t,g_t(X_t)g_t(X_t)^\top}.
    \end{split}
\end{equation} 

To analyze $Z_1$, consider a trajectory $\bar{x}_\tau$, $\tau\in[0,T]$ such that $\dot{\bar{x}}_{\tau}=f(\bar{x}_\tau,u_\tau^{*},\tau)$ with $\bar{x}_t=X_t$. By Ito's Lemma, it holds that $Z_1=\frac{\dd}{\dtau}\Phi_{M_\tau}(\bar{x}_\tau-x_\tau)|_{\tau=t}$. Notice that $\Phi_{M_\tau}(\bar{x}_\tau-x_\tau)$ is monotonically increasing with $\|\bar{x}_\tau-x_\tau\|_{M_\tau}$, which means $Z_1$ should have the same sign ($\pm$) as $\frac{\dd}{\dtau}\|\bar{x}_\tau-x_\tau\|_{M_\tau}^2|_{\tau=t}$. When $c_t=0$, the contraction condition implies that $\frac{\dd}{\dtau}\|\bar{x}_\tau-x_\tau\|_{M_\tau}^2\leq 2c_\tau\|\bar{x}_\tau-x_\tau\|_{M_\tau}^2=0$. Therefore,  $Z_1\leq0$.

$Z_2$ can be unfolded and bounded as follows:
\begin{equation*}\label{eq: part 2 FPK}
    \begin{split}
        Z_2=&\tfrac{1}{2}\innerp{\nabla^2V_t,g_t(X_t)g_t(X_t)^\top} \\
        =&\tfrac{1}{2} \expectw{\ell\sim\mS^{n-1}}{\innerp{\lambda^2e^{\lambda\innerp{\ell ,\bar{S}_t}}M_{t}\ell\ell^{\top},g_t(X_t)g_t(X_t)^{\top}}} \\
        \leq& \tfrac{1}{2}\expectw{\ell\sim\mS^{n-1}}{\lambda^2e^{\lambda\innerp{\ell,\bar{S}_t}}\tr{\ell\ell^{\top}}\,\|M_{t}g_t(X_t)g_t(X_t)^{\top}\|} \\
        \leq& \tfrac{\lambda^2\|M_t\|\sigma^2}{2}V_t \leq \tfrac{\lambda^2\upsig_t^2}{2}V_t.
    \end{split}
\end{equation*}

Combining the bounds on $Z_1$ and $Z_2$, we get
\begin{equation} \label{eq: dV<=aV+b}
\begin{split}
    \mA(V_t) \leq  \tfrac{\lambda^2\upsig_t^2}{2}V_t,
\end{split}
\end{equation}
which implies that $V_t=\Phi_{M_t}(X_t-x_t)$ is an affine martingale with $a_t=\frac{\lambda^2\upsig_t^2}{2}$ and $b_t\equiv0$. Therefore, for every $\upr>0$ and any $\eta\in\mS^{n-1}$, we get
\begin{equation}\label{eq: CT prob |v_t|<r_lam}
    \begin{split}
        &\prob{\|X_t-x_t\|_{M_t}\leq \upr, \forall t\leq T} \\
        =&  \prob{\|X_t-x_t\|_{M_t}\leq \|\upr\eta\|,~ \forall t\leq T}\\
        =&\prob{\Phi_{M_t}(X_t-x_t)\leq \amgf{\upr\eta}, \forall t\leq T} ~~~~ [\text{\textbf{Lemma \ref{lemma: M-AMGF}}}] \\
        =& \prob{e^{\frac{\lambda^2\int_t^T\upsig_\tau^2\dtau}{2}}\mathcal{E}(l^*_t)\leq e^{\frac{\lambda^2\int_t^T\upsig_\tau^2\dtau}{2}}\amgf{\upr\eta}, \forall t\leq T} \\
        \geq& \prob{e^{\frac{\lambda^2\int_t^T\upsig_\tau^2\dtau}{2}}\mathcal{E}(l^*_t)\leq \amgf{\upr\eta}, \forall t\leq T} \\
        \geq &1-\frac{e^{\frac{\lambda^2\int_0^T\upsig_\tau^2\dtau}{2}}}{\amgf{\upr\eta}},\quad \forall \eta\in\mS^{n-1} ~~~~[\textbf{Lemma \ref{lemma: CT-AM}}]\\
        \geq & 1- (1-\varepsilon^2)^{-\frac{n}{2}}\exp\left(\tfrac{\lambda^2\int_0^T\upsig_\tau^2\dtau}{2}-{\varepsilon\lambda \upr}\right) [\text{\textbf{Lemma \ref{lemma: M-AMGF}}}].
    \end{split}
\end{equation}
Minimizing the last line of \eqref{eq: CT prob |v_t|<r_lam} over $\lambda$, we get $\lambda^{*}=\frac{\varepsilon r}{\int_0^T\upsig_\tau^2\dtau}$. By Plugging $\lambda=\lambda^{*}$ into \eqref{eq: CT prob |v_t|<r_lam} and setting $$\upr=\sqrt{\tfrac{2\int_0^T\upsig_\tau^2\dtau}{\varepsilon^2}(\tfrac{n}{2}\log\left(\tfrac{1}{1-\varepsilon^2}\right)+\log(1/\delta))},$$
we arrive at the result of Theorem \ref{thm: prob_bound} in the case that $c\to0$. 

Next, to generalize from $c_t=0$ to any $c_t\in\R$, define $\tX_t=e^{-\psi_t}X_t$ and $\tx_t=e^{-\psi_t}x_t$. Lemma \ref{lemma: ct -> c=0} implies that the dynamics of $\tx_t$ has the contraction rate $\tilde{c}_t=0$, and the diffusion term of the dynamics of $\tX_t$ satisfies $\|e^{-\psi_t}g_t(X_t)e^{-\psi_t}g_t(X_t)^{\top}\|\leq e^{-2\psi_t}\sigma^2:= \tilde{\sigma}_t^2$. Therefore, following \eqref{eq: dv=Z1+Z2}-\eqref{eq: dV<=aV+b}, we know $\tilde{V}_t=\Phi_{M_t}(\tX_t-\tx_t)$ is an AM with $a_t=\frac{\lambda^2\|M_t\|\tilde{\sigma}_t^2}{2}$ and $b_t\equiv0$. Then, following \eqref{eq: CT prob |v_t|<r_lam} and the steps afterwards, we get $\prob{\|X_t-x_t\|_{M_t}\leq r_{\delta,t}, \forall t\leq T}=\prob{\|\tX_t-\tx_t\|_{M_t}\leq \tilde{r}_{\delta,t}, \forall t\leq T}\geq 1-\delta$, where
\begin{equation}\label{eq: CT r c>0}
\begin{split}
r_{\delta,t}=e^{\psi_t}\tilde{r}_{\delta,t}=\sqrt{e^{2\psi_t}\Psi_T(\varepsilon_1n+\varepsilon_2\log(1/\delta))}.
\end{split}
\end{equation}
This completes the first part of the proof.

Although \eqref{eq: CT r c>0} is mathematically correct for any $c_t\in\R$, we point out that this result can be overly conservative when $c_t<0$ and $t\ll T$ \cite{liu2026concentration}. 
For the rest of the proof, we improve the tightness of the bound $r_{\delta,t}$ for the case where $c_t<0$. Since the conservativeness appears when $T\gg t$, our strategy is to split $[0,T]$ into small intervals with length $\Delta t$, so that $t$ is close to $\lceil \frac{t}{\Delta t}\rceil\Delta t$, which is the endpoint of the interval that contains $t$. 

Define $N=T/\Delta t$.\footnote{We choose $\Delta t$ so that $N=T/\Delta t$ is an integer for convenience, but the same conclusion holds for arbitrary $\Delta t$.} 
For any $t\in(k\Delta t, (k+1)\Delta t)$, $k=0,\dots,N-1$, define a trajectory $y_t^{(k)}$ that satisfies
\begin{equation}
    \begin{split}
        \dot{y}_t^{(k)} = f(y_t^{(k)},u_t,t), \,\,\, y_{k\Delta t}^{(k)} = X_{k\Delta t}.
    \end{split}
\end{equation}
Note that $X_t$ and $y_t^{(k)}$ are associated trajectories over the time horizon $(k\Delta t, (k+1)\Delta t)$.
Let $\upr^\Delta=\sqrt{\Psi_{t}^{\Delta t}(\varepsilon_1n+\varepsilon_2\log\frac{2N}{\delta})}$
and $\tilde{r}^{\Delta}=e^{\int_{k\Delta t}^t c_\tau\dtau}\upr^\Delta$, then $\tilde{r}^{\Delta}\leq\upr^\Delta$ and it holds that
\begin{equation} \label{eq: t in k,k+1 dt}
    \begin{split}
        &\prob{\|X_t-y_t^{(k)}\|_{M_t}\leq \upr^\Delta,~ \forall t\in(k\Delta t, (k+1)\Delta t)} \\
        \geq &\prob{\|X_t-y_t^{(k)}\|_{M_t}\leq \tilde{r}^{\Delta},~ \forall t\in(k\Delta t, (k+1)\Delta t)} \\
        \geq &1-\frac{\delta}{2N},
    \end{split}
\end{equation}
 where the second ``$\geq$'' directly follows Theorem \ref{thm: prob_bound} by setting the time period as $\Delta t$ and the initial time as $k\Delta t$.

For $t=k\Delta t$, $k=1,\dots, N$, we need to derive the bound $r_{k\Delta t}$ such that 
\begin{equation}\label{eq: t=kdt}
\begin{split}
    \mathbb{P}\left(\|X_{k\Delta t}-x_{k\Delta t}\|_{M_{k\Delta t}}\leq r_{k\Delta t}\right)\geq 1-\frac{\delta}{2N}
\end{split}
\end{equation}
holds at $k\Delta t$. Such a \textit{single-time} probabilistic bound has been well-studied in \cite{szy2024TAC,liu2026concentration}, following which $r_{k\Delta t}$ can be determined as
 \begin{equation} 
     r_{k\Delta t}= \sqrt{e^{2\psi_{k\Delta t}}\Psi_{k\Delta t}(\varepsilon_1n+\varepsilon_2\log\frac{2N}{\delta})}.
 \end{equation}

Finally, we union the bounds \eqref{eq: t in k,k+1 dt} and \eqref{eq: t=kdt} to arrive the final result. Define
\begin{equation*}
    \begin{split}
       \textstyle &E_k^{(1)}:~ \|X_{k\Delta t}-x_{k\Delta t}\|\leq r_{k\Delta t} \\
        &E_k^{(2)}:~ \|X_t-y_t^{(k)}\|\leq r^\Delta,~ \forall t\in(k\Delta t, (k+1)\Delta t).
    \end{split}
\end{equation*}
By \eqref{eq: t in k,k+1 dt},\eqref{eq: t=kdt} and the union-bound inequality, 
$$\textstyle \prob{\left(\bigcap_{k=1}^N (E_k^{(1)} \bigcap E_k^{(2)})\right)}\geq 1-2\sum_{k=1}^{N} \frac{\delta}{2N} =1-\delta.$$ 
Notice that when these events happen, it holds that 
\begin{equation}\label{eq: CT improved bound}
    \begin{split}
        \|X_t-x_t\|\leq &  \|x_t-y_t^{(k)}\|+\|X_t-y_t^{(k)}\| \\
        \leq & \|X_{k\Delta t}-x_{k\Delta t}\|+\|X_t-y_t^{(k)}\| \\
        \leq & r_{k\Delta t}+\upr^\Delta \\
        \leq & (\sqrt{e^{2\psi_t}\Psi_t}+\sqrt{\Psi_{t}^{\Delta t}})\sqrt{\varepsilon_1n+\varepsilon_2\log\frac{2T}{\delta \Delta t}},
    \end{split}
\end{equation}

where the second ``$\leq$'' is because by integrating the contraction condition:
\begin{equation}\label{eq: xt'-xt}
\begin{split}
    &\|x_t-y_t^{(k)}\|_{M_t}\leq e^{\int_{k\Delta t}^t c_\tau\dtau}\|x_{k\Delta t}-y_{k\Delta t}^{(k)}\|_{M_{k\Delta t}} \\
    \leq &\|x_{k\Delta t}-y_{k\Delta t}^{(k)}\|_{M_{k\Delta t}}=\|x_{k\Delta t}-X_{k\Delta t}\|_{M_{k\Delta t}}.
\end{split}
\end{equation}

Combining \eqref{eq: CT improved bound} and \eqref{eq: CT r c>0}, we conclude that
 \begin{equation}
\mathbb{P}\left(\|X_t-x_t\|_{M_t}\leq r_{\delta,t}, ~\forall t\leq T\right)\geq 1-\delta,
   \end{equation} 
where $r_{\delta,t}$ satisfies \eqref{eq: r traj}. This completes the proof.
\end{proof}

\begin{remark}
    When $c_t\equiv c$ is time-invariant, the $r_{\delta,t}$ in \eqref{eq: r traj} reduces to the following result:
    \begin{equation}
        r_{\delta,t}
=
\frac{
    \bar\sigma_t\!\left(
        \sqrt{1 - e^{2ct}}
        +
        \sqrt{e^{-2c\Delta t} - 1}
    \right)
}{
    \sqrt{-2c}
}
\sqrt{
    \varepsilon_1 n + \varepsilon_2 \log\frac{2T}{\delta \Delta t}
}.
    \end{equation}
\end{remark}

Given the terminal time $T$ and the probability level $\delta$, the bound $r_{\delta,t}$ derived in Theorem \ref{thm: prob_bound} has an $\mO(\sqrt{\log T/\delta})$ dependence, which is the tightest case one can expect for Ito's stochastic systems \cite{liu2025safetyverificationnonlinearstochastic}. Therefore, the satisfaction of the constraint (\ref{eq: det_STLTO}d) is insensitive with $\delta$ and $T$ when choosing $M_t$ as the contraction metric and $r_{\delta,t}$ as \eqref{eq: r traj}. Also, \cite{szy2024TAC} has shown that the dependence of $r_{\delta,t}$ on $\sigma$ and $n$ has reached the tightest case under Assumption \ref{as: boundness}. Therefore, the core of tightening the bound $r_{\delta,t}$ derived in Theorem \ref{thm: prob_bound} is to decrease the value of $c_t$ and $\|M_t\|$ of the closed-loop system $f_{cl}$, which depends on the design of the tracking controller $K_t(\cdot;x_t^*, u_t^*)$.
 
\section{Feedback Motion Planning}
\label{sec: feedback control}

In this section, we focus on the design of feedback tracking controllers for the nominal trajectory generated by \eqref{eq: det_STLTO}. The role of the feedback layer is twofold: it stabilizes the stochastic trajectory around the nominal plan, and it induces the contraction metric and contraction rate needed by Theorem~\ref{thm: prob_bound}. All numerical optimization and solver implementations are deferred to Section~\ref{sec:numerical_optimization}.

Let $(\boldsymbol{x}^*,\boldsymbol{u}^*)$ be a nominal state-control trajectory pair solving the deterministic surrogate problem \eqref{eq: det_STLTO}. We seek a time-varying tracking law of the form
\begin{equation}
\label{eq:tracking_law_general}
u_t = u_t^* + K_t(X_t;x_t^*,u_t^*),
\end{equation}
where $K_t(x_t^*;x_t^*,u_t^*)=0$. Under this policy, the stochastic trajectory is regulated toward the nominal plan, yielding a closed-loop system that is strongly contracting in the sense of Definition~\ref{def: contraction}.

\subsection{Trajectory Tracking via Time-Varying LQR}
\label{sec: tvlqr_tracking}

We first consider a Time-Varying Linear Quadratic Regulator (TVLQR) constructed along the nominal trajectory $(\boldsymbol{x}^*,\boldsymbol{u}^*)$. Linearizing the deterministic dynamics \eqref{eq: deterministic_system} along the nominal plan gives the time-varying Jacobians
\begin{equation*}
A_t = \frac{\partial f}{\partial x}\bigg|_{x_t^*,u_t^*,t},
\qquad
B_t = \frac{\partial f}{\partial u}\bigg|_{x_t^*,u_t^*,t}.
\end{equation*}
Letting $\tilde{x}_t = x_t - x_t^*$ and $\tilde{u}_t = u_t - u_t^*$ denote the tracking error and control deviation, respectively, the local error dynamics are approximated by
\begin{equation*}
\dot{\tilde{x}}_t = A_t \tilde{x}_t + B_t \tilde{u}_t.
\end{equation*}

Given positive semidefinite state penalty $Q_t \succeq 0$, positive definite input penalty $R_t \succ 0$, and terminal penalty $Q_f \succ 0$, the TVLQR gain is obtained from the continuous-time differential Riccati equation
\begin{equation}
    -\dot{S}_t = S_t A_t + A_t^\top S_t - S_t B_t R_t^{-1} B_t^\top S_t + Q_t,
    \quad S_T = Q_f.
    \label{eq: cdre}
\end{equation}
The resulting feedback law is
\begin{equation*}
\tilde{u}_t = -K_t \tilde{x}_t,
\quad
K_t = R_t^{-1} B_t^\top S_t,
\end{equation*}
or equivalently,
\begin{equation*}
\label{eq:tvlqr_control_law}
u_t = u_t^* - K_t(X_t-x_t^*).
\end{equation*}

The closed-loop linearized dynamics are
\begin{equation*}
\dot{\tilde{x}}_t = A_{\mathrm{cl},t}\tilde{x}_t,
\qquad
A_{\mathrm{cl},t}=A_t-B_tK_t.
\end{equation*}
We use the Riccati solution itself as the contraction metric candidate, i.e., $M_t := S_t.$

To characterize the induced contraction rate, consider the quadratic energy
\begin{equation}
V_t = \tilde{x}_t^\top S_t \tilde{x}_t.
\end{equation}
Combining the closed-loop dynamics with \eqref{eq: cdre} yields
\begin{equation}
\begin{split}
\dot{V}_t
&= \tilde{x}_t^\top \big(\dot{S}_t + S_tA_{\mathrm{cl},t} + A_{\mathrm{cl},t}^\top S_t\big)\tilde{x}_t \\
&= -\tilde{x}_t^\top \big(Q_t + K_t^\top R_t K_t\big)\tilde{x}_t.
\end{split}
\end{equation}
To extract a contraction rate, we compare $\dot V_t$ with $V_t$. A contraction rate $c_t$ is valid if
\begin{equation}
\dot V_t \le 2c_t V_t
\end{equation}
for all tracking errors $\tilde{x}_t$. Equivalently,
\begin{equation}
-\tilde{x}_t^\top\big(Q_t+K_t^\top R_tK_t\big)\tilde{x}_t
\le
2c_t\,\tilde{x}_t^\top S_t\tilde{x}_t.
\end{equation}
Let $z=S_t^{1/2}\tilde{x}_t$, then for any $z_t$ the following inequality should hold

\begin{equation}
\frac{
z_t^\top S_t^{-1/2}\big(-Q_t-K_t^\top R_t K_t\big)S_t^{-1/2}z_t
}{
z_t^\top z_t
}
\le
2c_t
\end{equation}
Hence the local contraction rate can be taken as
\begin{equation}
    c_t =
    \frac{1}{2}
    \lambda_{\max}\!\Big(
    S_t^{-1/2}
    \big(-Q_t-K_t^\top R_t K_t\big)
    S_t^{-1/2}
    \Big),
    \label{eq: local_contraction}
\end{equation}
which is non-positive when $Q_t \succeq 0$ and $R_t \succ 0$. Since $S_t$ is continuous and positive definite on the compact interval $[0,T]$, there exist global matrices $M_1,M_2 \succ 0$ such that
\begin{equation}
M_1 \preceq S_t \preceq M_2, \qquad \forall t\in[0,T].
\end{equation}
These bounds can then be used in Theorem~\ref{thm: prob_bound}. The TVLQR construction is straightforward and computationally efficient, but it can be conservative due to the implicit metric induced by the Riccati solution.

\subsection{Time-Varying CCM Tracking Controller}
\label{sec:tvccm_tracking}

To reduce conservativeness beyond TVLQR tracking, we also consider a time-varying control contraction metric (TVCCM) construction. 
Linearizing along the nominal trajectory $(x_t^*,u_t^*)$ gives the differential dynamics
\begin{equation}
\delta \dot{x} = A_t \delta x + B_t \delta u,
\end{equation}
where
\begin{equation*}
A_t = \frac{\partial f}{\partial x}\bigg|_{x_t^*,u_t^*,t},
\qquad
B_t = \frac{\partial f}{\partial u}\bigg|_{x_t^*,u_t^*,t}.
\end{equation*}

The goal of TVCCM synthesis is to construct a differentiable dual metric $W_t \in \mathbb{S}_{++}^n$ and a differential feedback matrix $Y_t \in \mathbb{R}^{m\times n}$ such that the closed-loop differential dynamics are contracting with rate $c<0$. Defining
\begin{equation*}
M_t = W_t^{-1},
\qquad
K_t = Y_tW_t^{-1},
\end{equation*}
the continuous-time dual contraction condition~\cite{singh2017robust} is
\begin{equation}
\label{eq:continuous_tvccm_lmi}
-\dot{W}_t
+ A_tW_t + W_tA_t^\top
+ B_tY_t + Y_t^\top B_t^\top
\preceq -2cW_t.
\end{equation}

To improve the tightness of the resulting probabilistic tube, we further seek metrics that are well-conditioned and small in the task-relevant state projection. If the STL predicates are defined mainly in the position coordinates, then these coordinates are the most relevant for erosion (e.g., if the predicates consider goal regions or obstacle avoidance). We therefore introduce a common upper bound matrix $\bar{W}\succ0$ and scalar $\bar{\beta}>0$ such that
\begin{equation}
\label{eq:continuous_tvccm_metric_bound}
I \preceq W_t \preceq \bar{W},
\qquad
\bar{W}\preceq \bar{\beta}I,
\qquad \forall t\in[0,T],
\end{equation}
and penalize $\bar{\beta}$ together with the projected metric size
\begin{equation*}
\mathrm{tr}(P\bar{W}P^\top),
\end{equation*}
where $P$ selects the state coordinates appearing most prominently in the STL predicates, such as position states.

Once $(W_t,Y_t)$ is synthesized, the tracking controller is implemented as
\begin{equation*}
\label{eq:tvccm_control_law}
u_t = u_t^* + K_t(X_t-x_t^*),
\qquad
K_t=Y_tW_t^{-1}.
\end{equation*}
The induced contraction metric is then $M_t = W_t^{-1}$,
which can be used directly in the PRT bound of Section~\ref{sec: prob_bound}. Compared with TVLQR, TVCCM offers more freedom in shaping the metric and the contraction rate, and therefore can lead to tighter PRTs.

\subsection{Coupling Between Nominal Planning and Feedback Design}
\label{sec:planning_feedback_coupling}

As the readers might have noticed, there is an intrinsic coupling between nominal trajectory optimization and feedback design. The tightened STL constraint in \eqref{eq: det_STLTO} depends on the PRT radius profile $r_{\delta,t}$, since this radius determines the amount of predicate erosion. However, $r_{\delta,t}$ is itself determined by the contraction rate and contraction metric of the tracking controller through Theorem~\ref{thm: prob_bound}. Both TVLQR and trajectory-dependent TVCCM are synthesized along a nominal trajectory. Thus, nominal planning requires a certified tube bound, while controller synthesis requires the nominal plan around which the tube is to be constructed.

For TVLQR, a natural practical strategy is to alternate between these two steps. One first selects an initial erosion profile, solves \eqref{eq: det_STLTO} to obtain a nominal trajectory, synthesizes the TVLQR tracker along that trajectory, computes the resulting bound $r_{\delta,t}$, and then resolves the nominal planning problem with the updated erosion. This process is repeated until the bound used in planning dominates the bound certified by the resulting controller, i.e., the final planning erosion must be larger than the controller-induced tube.

If TVCCM is synthesized only along a given nominal trajectory, the same alternating strategy applies. However, TVCCM also admits a less coupled alternative. Inspired by~\cite{singh2017robust}, one may prescribe a constant target contraction rate $c<0$ and synthesize a time-varying dual metric and differential controller together with the uniform metric bounds \eqref{eq:continuous_tvccm_metric_bound}.
Assume that the continuous-time TVCCM condition \eqref{eq:continuous_tvccm_lmi} is feasible with this prescribed rate $c$. Since $M_t=W_t^{-1}$, the metric bounds imply
\begin{equation*}
\frac{1}{\bar\beta}I \preceq M_t \preceq I,
\qquad
\|M_t\|\le 1,
\qquad \forall t\in[0,T].
\end{equation*}
Therefore, $\overline{\sigma}_t$ in Theorem~\ref{thm: prob_bound} satisfies
\begin{equation*}
\overline{\sigma}_t=\sqrt{\|M_t\|}\sigma \le \sigma,
\end{equation*}
and the constant contraction rate specialization of Theorem~\ref{thm: prob_bound} yields the metric-space tube bound
\begin{equation}
\label{eq:global_tvccm_metric_bound}
r_{\delta,t}^{M}
\le
\frac{
\sigma\!\left(
\sqrt{1-e^{2ct}}
+
\sqrt{e^{-2c\Delta t}-1}
\right)
}{
\sqrt{-2c}
}
\sqrt{\varepsilon_1 n+\varepsilon_2\log\frac{2T}{\delta\Delta t}}.
\end{equation}
Consequently, the largest metric radius over the full horizon is
\begin{equation}
\label{eq:global_tvccm_radius}
\bar r_\delta^{M}
=
\frac{
\sigma\!\left(
\sqrt{1-e^{2cT}}
+
\sqrt{e^{-2c\Delta t}-1}
\right)
}{
\sqrt{-2c}
}
\sqrt{\varepsilon_1 n+\varepsilon_2\log\frac{2T}{\delta\Delta t}}.
\end{equation}
This bound is independent of the particular nominal trajectory and can therefore be computed before solving the nominal planning problem, provided the subsequent TVCCM synthesis is feasible with the prescribed $(c,\bar\beta)$. In addition, since $M_t \succeq \bar\beta^{-1}I$, any metric tube $\|e_t\|_{M_t}\le r$ implies
\begin{equation*}
\frac{1}{\bar\beta}\|e_t\|^2 \le \|e_t\|_{M_t}^2,
\end{equation*}
and hence the Euclidean tube bound
\begin{equation}
\label{eq:euclidean_bound}
\|e_t\| \le \sqrt{\bar\beta}\,\|e_t\|_{M_t}
\le
\sqrt{\bar\beta}\,r_{\delta,t}^{M}.
\end{equation}
This bound can be directly used to apply predicate erosion. The price paid for this decoupling is additional conservatism relative to trajectory-dependent TVCCM synthesis.

\begin{algorithm}[t]
\caption{Feedback Motion Planning Pipeline}
\label{alg:feedback_motion_planning}
\DontPrintSemicolon
\KwIn{System dynamics, STL formula $\varphi$, horizon $T$, risk level $\delta$, controller type}
\KwOut{Nominal trajectory $(\boldsymbol{x}^*,\boldsymbol{u}^*)$, tracking controller $K_t(\cdot;x_t^*)$}

\uIf{trajectory-independent TVCCM is selected}{
Choose design parameters $(c,\bar\beta)$ and solve the TVCCM synthesis problem subject to \eqref{eq:continuous_tvccm_lmi}\eqref{eq:continuous_tvccm_metric_bound}\;
Compute the nominal-independent erosion radius from \eqref{eq:euclidean_bound}\;
Solve the deterministic surrogate problem \eqref{eq: det_STLTO} using the induced eroded STL formula\;
Synthesize the TVCCM tracker along the resulting nominal trajectory and verify feasibility\;
}
\Else{
Initialize a planning erosion profile $\hat r_{\delta,t}^{(0)}$\;
\For{$i=0,1,2,\dots$ until convergence}{
Solve the deterministic surrogate problem \eqref{eq: det_STLTO} using $\hat r_{\delta,t}^{(i)}$\;

Solve the trajectory-dependent TVLQR/TVCCM synthesis problem along $(\boldsymbol{x}^{*,(i)},\boldsymbol{u}^{*,(i)})$\;
Compute the controller-induced PRT radius $r_{\delta,t}^{(i)}$ from Theorem~\ref{thm: prob_bound}\;
Update $\hat r_{\delta,t}^{(i+1)} \leftarrow r_{\delta,t}^{(i)}$\;
\If{$\hat r_{\delta,t}^{(i+1)} \leq r_{\delta,t}^{(i)}$}{
break\;
}
}
}
\Return{$(\boldsymbol{x}^*,\boldsymbol{u}^*)$, $K_t(\cdot;x_t^*)$}
\end{algorithm}

We summarize the overall algorithm in Algorithm~\ref{alg:feedback_motion_planning}. In practice, both the nominal trajectory optimization and the tracking controller synthesis problems are discretized before they can be solved numerically. We discuss the implementations in the next section.

\section{Numerical Optimization}
\label{sec:numerical_optimization}

We defined the stochastic STL trajectory optimization problem, introduced predicate erosion, derived the contraction-based PRT bound, and formulated feedback tracking controllers. The goal of this section is to convert these ingredients into \emph{computable numerical algorithms}. To do so, we introduce support times, discretize the nominal trajectory optimization problem, and then describe how the tracking controller constructions are implemented numerically on the support grid.

\subsection{Support-Time Discretization}

The deterministic surrogate problem \eqref{eq: det_STLTO} is infinite-dimensional. We therefore choose support times
\begin{equation*}
0=t_0<t_1<\cdots<t_N=T,
\qquad
h_k:=t_{k+1}-t_k,
\end{equation*}
and represent the continuous state and control trajectories by the discrete variables
\begin{equation*}
x_k := x(t_k), \quad u_k := u(t_k), \quad k=0,\ldots,N.
\end{equation*}
The nominal plan is thus encoded by the support-point sequences
\begin{equation*}
\boldsymbol{x}_{0:N} := (x_0,\ldots,x_N), \quad
\boldsymbol{u}_{0:N-1} := (u_0,\ldots,u_{N-1}).
\end{equation*}

Any consistent transcription of the dynamics can then be used. For instance, one may employ direct shooting, trapezoidal collocation, or Hermite-Simpson collocation~\cite{kelly2017introduction}. We keep the presentation agnostic to the specific scheme and denote the resulting dynamics map by
\begin{equation*}
x_{k+1}=F_k(x_k,u_k), \qquad k=0,\ldots,N-1,
\end{equation*}
where $F_k$ is induced by the chosen integration or collocation method.

The STL formulas are also evaluated over the support times. More precisely, the STL formulas are interpreted over the support states $\boldsymbol{x}_{0:N}$ or $\boldsymbol{X}_{0:N}$. Each atomic predicate is evaluated at the support states, and the temporal operators are enforced over the corresponding support-time indices lying in the specified time intervals. Since bounded-horizon STL depends only on finitely many time instants once a support grid is fixed, this yields a finite-dimensional constraint system suitable for numerical optimization. 

After predicate erosion, the remaining dynamics constraint is deterministic, so the continuous-time problem \eqref{eq: det_STLTO} can be handled by standard trajectory-optimization transcription techniques. For standard transcriptions such as Runge--Kutta and collocation methods, the resulting finite-dimensional nonlinear program approximates the original continuous-time optimal control problem, and under suitable regularity assumptions, local solutions of the discretized problem converge to local solutions of the continuous-time problem as the mesh is refined~\cite{dontchev2000second,kameswaran2008convergence}. In practice, discretization error can be further controlled by mesh-refinement procedures based on residual estimates~\cite{patterson2015ph,kelly2017introduction}.

\subsection{Trajectory Optimization under STL Specifications}

There are several ways to turn the support-time STL constraints into a numerical optimization problem. Mixed-integer formulations introduce binary variables and are exact for suitable classes of systems and predicates~\cite{belta2019formal,raman2014model}, but they become difficult to scale to nonlinear systems and long horizons. Another option is to replace the max/min operations in the robustness semantics with smooth approximations. 
Along this line, \cite{gilpin2020smooth} replaces the nonsmooth $\min$ and $\max$ operators in STL robustness by differentiable surrogates, which is suitable for gradient-based optimization but generally introduce approximation error. In this work, we instead adopt the exact smooth reformulation proposed in~\cite{han2025exactsmoothreformulationstrajectory}. At a high level, this method augments the planning problem with auxiliary variables so that the $\min$ and $\max$ can be represented by additional smooth nonlinear constraints without changing the feasible set of the support-time STL problem. This choice lets us use standard nonlinear programming solvers while retaining exactness.

Using this reformulation, the support-time nominal trajectory optimization problem can be written as
\begin{subequations}\label{eq:exact_smooth_nlp}
\begin{align}
&\min_{\boldsymbol{x}_{0:N},\,\boldsymbol{u}_{0:N-1},\,z}\quad 
 \sum_{k=0}^{N-1} L_{t_k}(x_k,u_k) + \Phi_T(x_N)
\label{eq:exact_smooth_nlp_obj}\\
\text{s.t.}\quad
& x_0 = x_0^{\mathrm{given}}, \\
& x_{k+1} = F_k(x_k,u_k), \qquad k=0,\ldots,N-1, \\
& u_k \in \mathcal{U}, \qquad k=0,\ldots,N-1, \\
& (\boldsymbol{x}_{0:N},z)\ \text{satisfy smooth encoding of } \tilde{\varphi}.
\label{eq:exact_smooth_nlp_stl}
\end{align}
\end{subequations}
Here $z$ collects the auxiliary variables introduced by the smooth STL transcription. By construction, \eqref{eq:exact_smooth_nlp_stl} is equivalent to enforcing the eroded STL specification on the support trajectory. The resulting finite-dimensional problem is a smooth nonlinear program and can be solved by off-the-shelf NLP solvers, such as SNOPT~\cite{gill2005snopt}.

\subsection{Numerical Implementation of Feedback Controllers}

Once a nominal plan $(\boldsymbol{x}_{0:N}^*,\boldsymbol{u}_{0:N-1}^*)$ has been obtained, the continuous-time feedback laws of Section~\ref{sec: feedback control} must also be implemented numerically.

\subsubsection{TVLQR}

For TVLQR, we evaluate the Jacobians
\begin{equation*}
A_k := \frac{\partial f}{\partial x}\bigg|_{x_k^*,u_k^*,t_k},
\qquad
B_k := \frac{\partial f}{\partial u}\bigg|_{x_k^*,u_k^*,t_k},
\end{equation*}
along the nominal support trajectory. We then integrate the continuous-time Riccati equation \eqref{eq: cdre} backward in time numerically from $t=T$ to $t=0$ to obtain support-time matrices
\begin{equation*}
S_k \approx S(t_k), \qquad k=0,\ldots,N.
\end{equation*}
The feedback gains at the support times are computed as
\begin{equation*}
K_k = R_k^{-1}B_k^\top S_k.
\end{equation*}
During rollout, these gains can be applied through zero-order hold or interpolation between consecutive support times. The sampled Riccati matrices also provide the support-time metric values
\begin{equation*}
M_k := S_k,
\end{equation*}
from which numerical lower and upper metric bounds can be extracted for use in the PRT computation.

\subsubsection{TVCCM}

For TVCCM, we discretize the continuous-time synthesis problem \eqref{eq:continuous_tvccm_lmi} on the support-time grid. Let
\begin{equation*}
A_k := \frac{\partial f}{\partial x}\bigg|_{x_k^*,u_k^*,t_k},
\qquad
B_k := \frac{\partial f}{\partial u}\bigg|_{x_k^*,u_k^*,t_k}.
\end{equation*}
Approximating the metric derivative in \eqref{eq:continuous_tvccm_lmi} by a finite difference,
\begin{equation*}
\dot{W}_{t_k} \approx \frac{W_{k+1}-W_k}{h_k},
\end{equation*}
we obtain the sampled contraction constraints
\begin{equation}
\label{eq:sampled_tvccm_lmi}
-\frac{W_{k+1}-W_k}{h_k}
+ A_kW_k + W_kA_k^\top
+ B_kY_k + Y_k^\top B_k^\top
\preceq - 2cW_k,
\end{equation}
for $k=0,\ldots,N-1$. To control the metric size and conditioning, we also impose
\begin{equation*}
I \preceq W_k \preceq \bar{W},
\qquad
\bar{W}\preceq \bar{\beta}I,
\qquad k=0,\ldots,N.
\end{equation*}
This leads to a semidefinite program of the form
\begin{subequations}\label{eq:tvccm_sdp}
\begin{align}
\min_{\{W_k,Y_k\},\,\bar{W},\,\bar{\beta}}\quad
& \bar{\beta} + w\,\mathrm{tr}(P\bar{W}P^\top) \\
\text{s.t.}\quad
& \eqref{eq:sampled_tvccm_lmi} \text{ holds for } k=0,\ldots,N-1, \\
& I \preceq W_k \preceq \bar{W}, \qquad k=0,\ldots,N, \\
& \bar{W}\preceq \bar{\beta}I,
\end{align}
\end{subequations}
where $w>0$ is a weight to adjust the priority of the two cost terms. We want to control both of the two cost terms because the radius of the PRT projected on the task-relevant coordinates depends on both the lower bound and the upper bound of the metric. The semidefinite program can be solved by any off-the-shelf solver, such as MOSEK~\cite{mosek}.

After solving \eqref{eq:tvccm_sdp}, we recover the metrics and feedback gains via
\begin{equation*}
M_k = W_k^{-1},
\qquad
K_k = Y_kW_k^{-1}.
\end{equation*}
The gains are then applied between support times by zero-order hold or interpolation.

\section{Simulation Experiments}
\label{sec:experiments}

We evaluate the proposed framework on four simulated benchmarks: a double integrator, a Dubins car, a PVTOL aircraft, and a 3D quadrotor model. The system dynamics and STL tasks are introduced in Section~\ref{sec:simulation_benchmarks}. The nominal trajectory optimization problems are formulated using the mathematical programming interface in Drake~\cite{tedrake2019drake} and solved with SNOPT~\cite{gill2005snopt}. In all simulation experiments, the nominal plan is tracked using a TVCCM controller. The corresponding semidefinite program for TVCCM synthesis is implemented in CVXPY~\cite{diamond2016cvxpy} and solved with MOSEK~\cite{mosek}.

For all benchmarks, let $\mathbf{p}(x)$ denote the position projection of the state.
For any set $\mathcal{S}$ in the position space, we define the atomic predicate $\pi_{\mathcal{S}} := \big(\mathbf{p}(x)\in \mathcal{S}\big)$,
equivalently $\mu_{\mathcal{S}}(x)\ge 0$ for the corresponding set function.
For the cost function, we use a quadratic running cost on the control input. 
We consider a risk level $\delta=10^{-3}$. All experiments are conducted on a desktop computer with an Intel Core i7-9700 CPU and 64GB of RAM.

Figure~\ref{fig:sim_benchmarks} visualizes the nominal plans and stochastic rollouts for the four benchmarks. In each case, the stochastic trajectories remain close to the nominal plan while satisfying the original STL task. Section~\ref{sec:deterministic_baselines} compares empirical STL satisfaction probabilities across planning strategies. The proposed method achieves $100\%$ satisfaction on all four benchmarks over $10000$ stochastic rollouts per task, whereas baselines that do not explicitly handle the chance constraint exhibit substantially lower satisfaction rates, as shown in Table~\ref{tab:satisfaction_probability}. Section~\ref{sec:chance_constraint_baselines} further compares the conservativeness of different STL chance-constraint handling schemes, as shown in Figure~\ref{fig:chance_decomposition_comparison}.

\begin{figure*}[t]
    \centering
    \includegraphics[width=\linewidth]{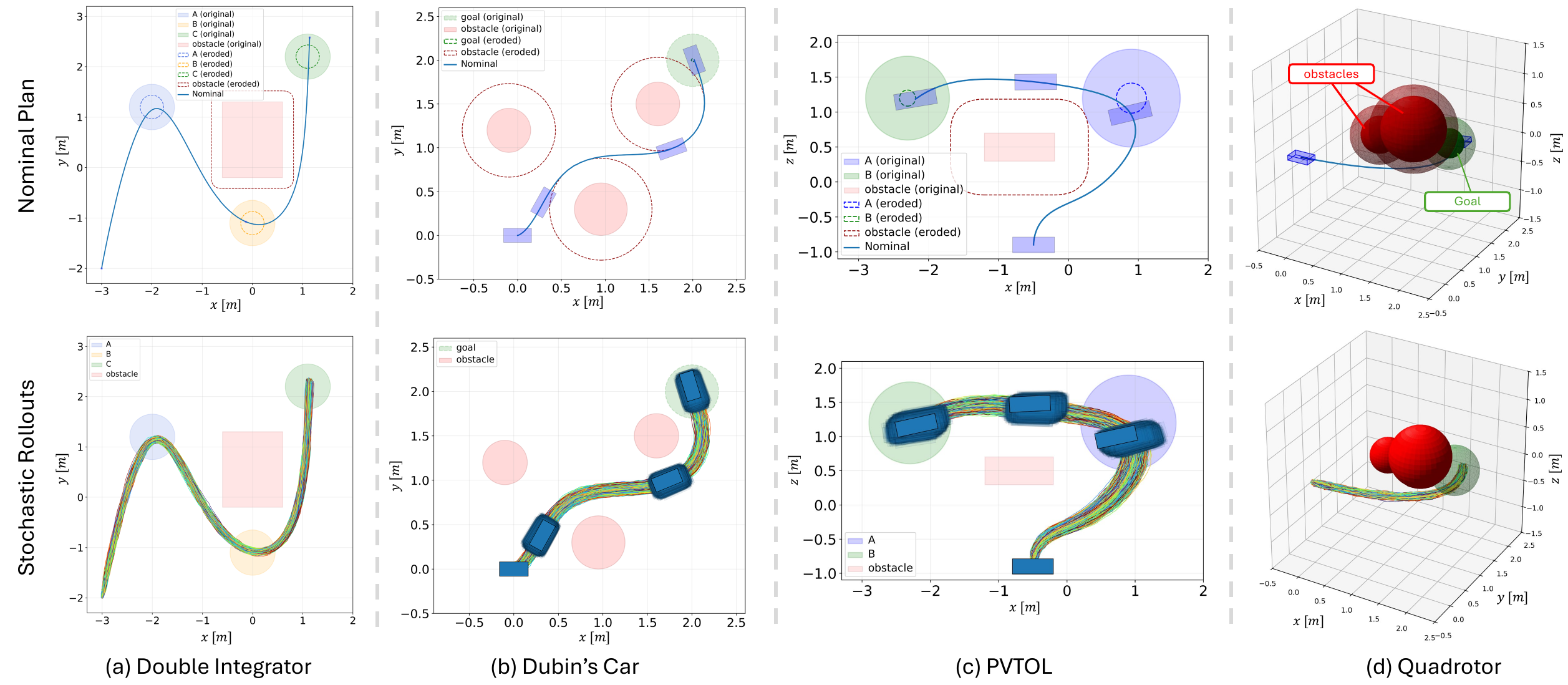}
    \caption{\textbf{Nominal plans and stochastic rollouts for the simulated benchmarks.} The top row shows the nominal trajectories together with the original and eroded predicates used in planning. The bottom row shows stochastic closed-loop rollouts under the synthesized feedback controller. Across all four systems, the stochastic trajectories satisfy the original STL tasks.}
    \label{fig:sim_benchmarks}
\end{figure*}

\subsection{Benchmarks}
\label{sec:simulation_benchmarks}
\paragraph{Double Integrator (DI)}
\label{sec:exp_double_integrator}

We first consider the 2D double-integrator system with state
$x_t = [p_{x,t}, p_{y,t}, v_{x,t}, v_{y,t}]^\top$
and control
$u_t = [a_{x,t}, a_{y,t}]^\top$.
The stochastic dynamics are
\begin{equation}
\label{eq:exp_di_dynamics}
\mathrm{d}X_t =
\begin{bmatrix}
v_{x,t} \\
v_{y,t} \\
a_{x,t} \\
a_{y,t}
\end{bmatrix}\mathrm{d}t
+ G\,\mathrm{d}W_t,
\end{equation}
where $G=0.02\cdot I_4$.
The admissible control set is
$u_t \in [-3,3]^2$.
The initial state is
$x_0 = [-3.0, -2.0, 0.0, 0.0]^\top$.

The STL task requires the trajectory to visit three goal regions in sequence while always avoiding an obstacle. Let
\begin{align*}
\mathcal{A} &= \{x:\|\mathbf{p}(x)-(-2.0,1.2)\|\le 0.45\}, \\
\mathcal{B} &= \{x:\|\mathbf{p}(x)-(0.0,-1.1)\|\le 0.45\}, \\
\mathcal{C} &= \{x:\|\mathbf{p}(x)-(1.1,2.2)\|\le 0.45\}.
\end{align*}
and let $\mathcal{O}$ denote the rectangular obstacle
$[-0.6,0.6]\times[-0.2,1.3]$.
The STL specification is
\begin{align*}
\label{eq:di_stl}
\varphi_{\mathrm{DI}}
=
(\neg \pi_{\mathcal{B}})\Until_{[0,T]}\pi_{\mathcal{A}}
\; \wedge\;
(\neg \pi_{\mathcal{C}})\Until_{[0,T]}\pi_{\mathcal{B}}
\; \wedge\;
\Eventually_{[0,T]}\pi_{\mathcal{C}} \\
\; \wedge\;
\Always_{[0,T]}\neg \pi_{\mathcal{O}}.
\end{align*}
In words, the double-integrator must first reach region $\mathcal{A}$, then reach region $\mathcal{B}$, and finally reach region $\mathcal{C}$, while remaining outside the obstacle throughout the entire horizon. The nominal planner uses horizon 8s with step size 0.1s. Figure~\ref{fig:sim_benchmarks}(a) shows that the planned trajectory threads the three goals while maintaining clearance from the obstacle. Over $10000$ stochastic rollouts, the closed-loop system satisfies the original STL specification in all runs.

\paragraph{Dubins Car}
\label{sec:exp_dubins}

We next consider the Dubins car with state
$x_t = [p_{x,t}, p_{y,t}, \theta_t, v_t]^\top$
and control
$u_t = [a_t, \omega_t]^\top$.
Its stochastic dynamics are
\begin{equation}
\label{eq:exp_dubins_dynamics}
\mathrm{d}X_t =
\begin{bmatrix}
v_t \cos \theta_t \\
v_t \sin \theta_t \\
\omega_t \\
a_t
\end{bmatrix}\mathrm{d}t
+ G\,\mathrm{d}W_t,
\end{equation}
where $G=\mathrm{diag}(0.02, 0.02, 0.01, 0.02)$.
The admissible control set is
$u_t \in [-2,2]\times[-0.8,0.8]$,
and the initial state is
$x_0 = [0.0, 0.0, 0.0, 0.0]^\top$.

We consider a reach-avoid task. Let
\[
\mathcal{G}=\{x:\|\mathbf{p}(x)-(2.0,2.0)\|\le 0.3\},
\]
and let the obstacle sets be
\begin{align*}
\mathcal{O}_1 &= \{x:\|\mathbf{p}(x)-(0.95,0.3)\|\le 0.3\}, \\
\mathcal{O}_2 &= \{x:\|\mathbf{p}(x)-(-0.1,1.2)\|\le 0.25\}, \\
\mathcal{O}_3 &= \{x:\|\mathbf{p}(x)-(1.6,1.5)\|\le 0.25\}.
\end{align*}
The STL specification is
\begin{equation}
\label{eq:dubins_stl}
\varphi_{\mathrm{Dubins}}
=
\Eventually_{[0,T]}\pi_{\mathcal{G}}
\;\wedge\;
\Always_{[0,T]}
\big(\neg \pi_{\mathcal{O}_1}\wedge \neg \pi_{\mathcal{O}_2}\wedge \neg \pi_{\mathcal{O}_3}\big).
\end{equation}
In words, the Dubins car must eventually reach the goal region while avoiding all three obstacle disks at all times. The nominal planner uses horizon 5s with step size 0.1s. Figure~\ref{fig:sim_benchmarks}(b) shows that the nominal plan navigates a narrow corridor between obstacles. Over $10000$ stochastic rollouts, the closed-loop system satisfies the original STL specification in all runs.

\paragraph{PVTOL}
\label{sec:exp_pvtol}

The PVTOL system is a more challenging underactuated nonlinear system. 
The state is
$x_t = [x_t, z_t, \phi_t, v_{x,t}, v_{z,t}, r_t]^\top$,
where $r_t = \dot{\phi}_t$ is the angular rate, and the control input is
$u_t = [u_{\ell,t}, u_{r,t}]^\top$.
The stochastic dynamics are
\begin{equation}
\label{eq:exp_pvtol_dynamics}
\mathrm{d}X_t =
\begin{bmatrix}
v_{x,t}\cos \phi_t - v_{z,t}\sin \phi_t \\
v_{x,t}\sin \phi_t + v_{z,t}\cos \phi_t \\
r_t \\
v_{z,t} r_t - g \sin \phi_t \\
-\,v_{x,t} r_t - g \cos \phi_t + \frac{u_{\ell,t}+u_{r,t}}{m} \\
\frac{l}{J}(u_{\ell,t}-u_{r,t})
\end{bmatrix}\mathrm{d}t
+ G\,\mathrm{d}W_t,
\end{equation}
with parameters
$m=0.486$, $J=0.00383$, $g=9.81$, and $l=0.25$.
We set $G=0.03 \cdot I_6$.
The initial state is
$x_0 = [-0.5, -0.9, 0.0, 0.0, 0.0, 0.0]^\top$.

The PVTOL system is tasked to pass region A before entering region B while avoiding obstacles in the $x$-$z$ plane.
Let
\begin{align*}
\mathcal{A} &= \{x:\|\mathbf{p}(x)-(0.9,1.2)\|\le 0.7\}, \\
\mathcal{B} &= \{x:\|\mathbf{p}(x)-(-2.3,1.2)\|\le 0.6\}.
\end{align*}
and let the obstacle be the rectangle
\[
\mathcal{O}=[-1.2,-0.2]\times[0.3,0.7].
\]
The STL specification is
\begin{equation}
\label{eq:pvtol_stl}
\varphi_{\mathrm{PVTOL}}
=
\big(\neg \pi_{\mathcal{B}}\big)\Until_{[0,T]}\pi_{\mathcal{A}}
\;\wedge\;
\Eventually_{[0,T]}\pi_{\mathcal{B}}
\;\wedge\;
\Always_{[0,T]}\neg \pi_{\mathcal{O}}.
\end{equation}
In words, the PVTOL must reach region $\mathcal{A}$ before entering region $\mathcal{B}$, while always avoiding the obstacle rectangle. The first conjunct enforces the ordering, and the second conjunct ensures that region $\mathcal{B}$ is eventually reached as part of the task. The nominal planner uses a horizon of 4s with a step size of 0.05s. Figure~\ref{fig:sim_benchmarks}(c) shows that the nominal trajectory passes region $\mathcal{A}$ while avoiding the obstacle before entering region $\mathcal{B}$. Over $10000$ stochastic rollouts, the closed-loop system satisfies the original STL specification in all runs.

\paragraph{3D Quadrotor (Quad)}
\label{sec:exp_quadrotor}

Finally, we evaluate the approach on a quadrotor flying in 3D space with state
$x_t = [p_{x,t}, p_{y,t}, p_{z,t}, v_{x,t}, v_{y,t}, v_{z,t}, \theta_{x,t}, \theta_{y,t}]^\top$
and control
$u_t = [a_{z,t}, \omega_{x,t}, \omega_{y,t}]^\top$.
Its stochastic dynamics are
\begin{equation}
\label{eq:exp_quadrotor_dynamics}
\mathrm{d}X_t =
\begin{bmatrix}
v_{x,t} \\
v_{y,t} \\
v_{z,t} \\
g \tan(\theta_{x,t}) \\
g \tan(\theta_{y,t}) \\
a_{z,t} \\
\omega_{x,t} \\
\omega_{y,t}
\end{bmatrix}\mathrm{d}t
+ G\,\mathrm{d}W_t,
\end{equation}
where $G=0.02 \cdot I_8.$

The initial condition is
$x_0 = 0$.
The admissible control set is given by the default bounds in the configuration.

The STL task is a 3D reach-avoid problem.
Let
\[
\mathcal{G}=\{x:\|\mathbf{p}(x)-(2.0,1.0,0.3)\|\le 0.4\},
\]
and let the obstacle balls be
\begin{align*}
\mathcal{O}_1 &= \{x:\|\mathbf{p}(x)-(0.5,1.5,-0.1)\|\le 0.3\}, \\
\mathcal{O}_2 &= \{x:\|\mathbf{p}(x)-(1.5,0.8,0.5)\|\le 0.5\}.
\end{align*}
The STL specification is
\begin{equation}
\label{eq:quadrotor_stl}
\varphi_{\mathrm{Quad}}
=
\Eventually_{[0,T]}\pi_{\mathcal{G}}
\;\wedge\;
\Always_{[0,T]}
\big(\neg \pi_{\mathcal{O}_1}\wedge \neg \pi_{\mathcal{O}_2}\big).
\end{equation}
In words, the quadrotor must eventually enter the goal ball in three-dimensional space while always avoiding both obstacle balls. The nominal planning horizon is 4s with step size 0.05s. Figure~\ref{fig:sim_benchmarks}(d) shows that the nominal trajectory bends around the 3D obstacles before entering the goal region. Over $10000$ stochastic rollouts, the closed-loop system satisfies the original STL specification in all runs.

\subsection{Comparison with Deterministic STL Planning Baselines}

\begin{table}[t]
\centering
\begin{tabular}{l|cccc}
\toprule
\textbf{Method} & \textbf{DI} & \textbf{Dubins} & \textbf{PVTOL} & \textbf{Quad} \\
\midrule
Non-Robust & $0.2\%$ & $2.6\%$ & $34.6\%$ & $50.0\%$ \\
SHMPC & $21.8\%$ & $22.4\%$ & $9.0\%$ & $77.6\%$ \\
Our Method & $100\%$ & $100\%$ & $100\%$ & $100\%$ \\
\bottomrule
\end{tabular}
\caption{\textbf{Specification satisfaction probability on the four simulation benchmarks.} The table reports the empirical probability that the original STL specification is satisfied under three planning strategies: non-robust STL planning, shrinking-horizon MPC (SHMPC), and the proposed method. The Non-Robust baseline uses the same feedback motion planning pipeline without predicate erosion, while SHMPC repeatedly replans from the observed stochastic state using deterministic STL constraints. The comparison shows that deterministic planning or replanning alone is insufficient for reliable STL satisfaction.}
\label{tab:satisfaction_probability}
\end{table}

\label{sec:deterministic_baselines}
Table~\ref{tab:satisfaction_probability} compares the proposed method with two baselines that do not explicitly handle the STL chance constraint. 
\paragraph{Non-Robust} The first baseline, denoted \emph{Non-Robust}, is our feedback motion planning pipeline without predicate erosion. It performs deterministic STL planning on the nominal system and then tracks the resulting plan with feedback in the stochastic system. This baseline shows that deterministic STL satisfaction alone is not enough: although the nominal trajectory satisfies the STL specification, stochastic fluctuations around that trajectory can still cause collisions with obstacles or failures to enter the goal region, leading to very low satisfaction probabilities in stochastic rollouts.

\paragraph{SHMPC} The second baseline adopts a shrinking-horizon model predictive control (SHMPC) strategy based on the shrinking-horizon scheme in~\cite{farahani2018shrinking}. At each feedback step, SHMPC solves a finite-horizon STL planning problem initialized at the newly observed stochastic state, applies only the first control input, and then replans with the remaining horizon shortened by one step. This repeated replanning gives SHMPC the ability to reject stochastic disturbances, and indeed its satisfaction probabilities are higher than those of the Non-Robust baseline (except for the PVTOL). However, because SHMPC in this comparison still uses deterministic STL constraints rather than considering an explicit chance constraint, it remains far from reliable. Moreover, the need to replan online at every step substantially increases computational burden during execution.

In contrast, our method explicitly accounts for stochastic deviations through predicate erosion and therefore achieves $100\%$ satisfaction on all four benchmarks. The comparison in Table~\ref{tab:satisfaction_probability} shows that deterministic feedback motion planning alone or replanning alone is insufficient. Explicit chance-constraint handling is necessary to obtain reliable STL satisfaction in the stochastic setting.

\subsection{Comparison on Chance-Constraint Handling}
\label{sec:chance_constraint_baselines}

\begin{figure}[t]
    \centering
    \includegraphics[width=\linewidth]{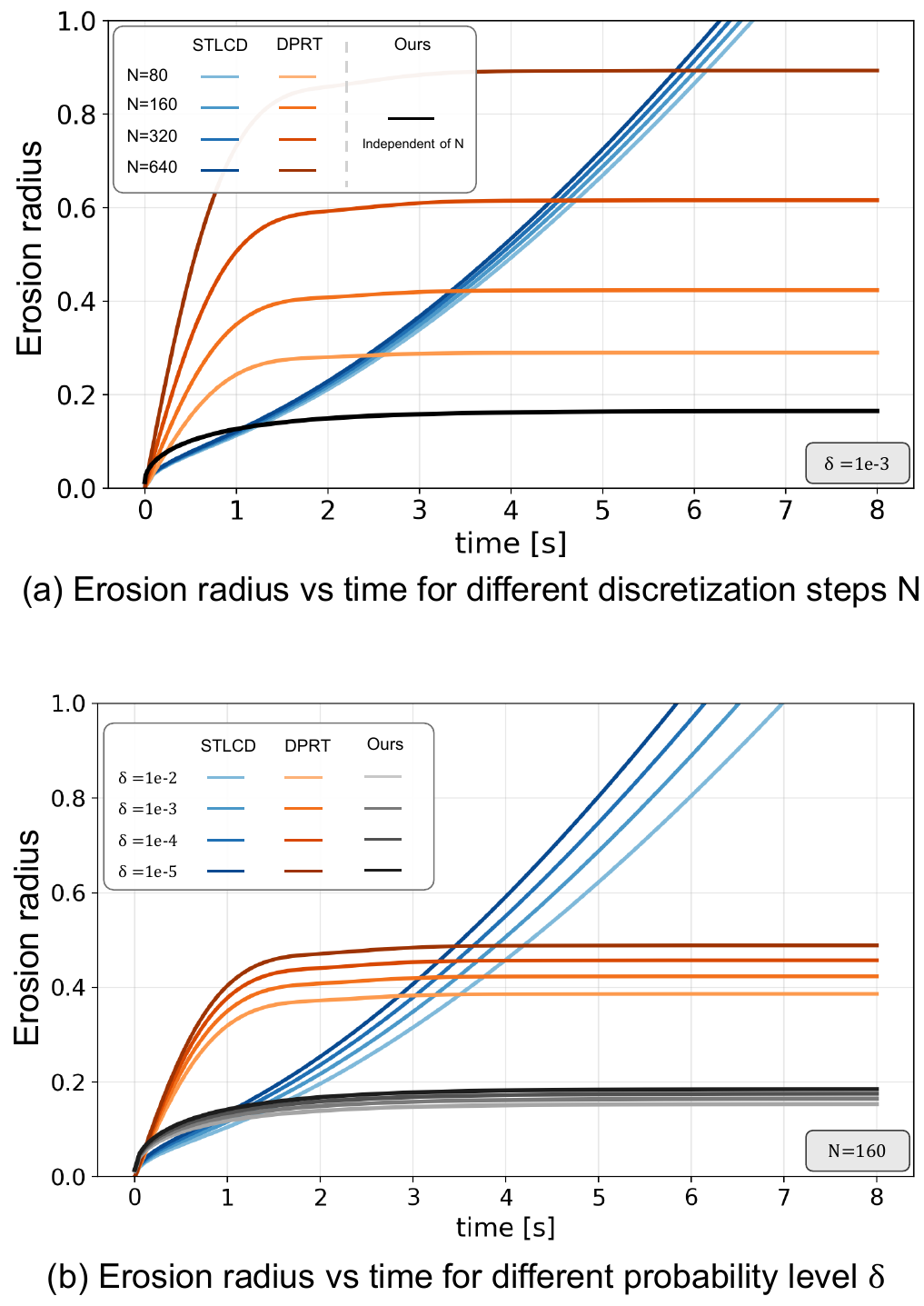}
\caption{\textbf{Comparison of chance-constraint tightening methods on a linear double-integrator benchmark.} (a) Erosion radius versus time under different numbers of discretization steps $N$. Our method provides a continuous-time bound and is therefore independent of $N$, whereas both discrete-time baselines become more conservative as $N$ increases. (b) Erosion radius versus time under different risk levels $\delta$. All methods require larger erosion for smaller $\delta$, but the dependence is significantly milder for our method. The comparison is restricted to a linear system because the baselines do not extend to the nonlinear systems.}
    \label{fig:chance_decomposition_comparison}
\end{figure}

Figure~\ref{fig:chance_decomposition_comparison} compares our chance-constraint handling scheme with two representative STL chance-constraint tightening baselines on a double-integrator reach-avoid task. Detailed system dynamics, task specification, and baseline derivations are deferred to Appendix~\ref{sec:appendix_baselines}. We restrict this comparison to the double-integrator benchmark because the baselines are only applicable to discrete-time linear systems, whereas our method naturally extends to continuous-time nonlinear systems.

The two baselines are discrete-time probabilistic reachable tube (DPRT) from \cite{vlahakis2024probabilistic} and STL-structure based chance decomposition (STLCD) from \cite{farahani2018shrinking}. Like our approach, DPRT constructs a PRT around a nominal trajectory and then tightens the predicates using the tube radius. The key difference is that DPRT works on a discrete-time support grid and therefore its tightening depends directly on the chosen temporal discretization. STLCD decomposes the STL chance constraint according to the logical structure of the formula and allocates probability budgets across predicates and time indices. Although conceptually different from tube-based methods, it ultimately also produces predicate-level tightening, so the three methods can be compared directly through their erosion radius.

Figure~\ref{fig:chance_decomposition_comparison}(a) plots the erosion radius as a function of time for different numbers of discretization steps $N$. Our method provides a continuous-time tube bound and therefore yields a single curve that is independent of $N$. In contrast, both discrete-time baselines become more conservative as $N$ increases, since finer discretization introduces more support times at which the chance constraint must be enforced. The two baselines also exhibit different temporal behavior: DPRT remains essentially flat after a short transient but at a much larger radius than our method if $N$ is large, whereas STLCD keeps growing over time and eventually becomes the most conservative of the three.

Figure~\ref{fig:chance_decomposition_comparison}(b) plots the erosion radius under different risk levels $\delta$. For all methods, the erosion radius increases as $\delta$ decreases, since a higher confidence level requires a larger safety margin. However, the growth is much milder for our method because the PRT bound depends on $\delta$ only through $\sqrt{\log(1/\delta)}$. The same qualitative separation remains visible: DPRT stays more conservative than our bound, while STLCD grows the fastest and quickly reaches radii that would make the tightened planning problem infeasible.

Overall, this comparison shows that, even on linear systems, our continuous-time predicate-erosion framework is less conservative than existing STL chance-constraint decomposition methods. Moreover, while these baselines are restricted to linear systems, our method extends naturally to the nonlinear systems considered throughout the rest of the paper.

\section{High-level Planning for Legged Robots}
\label{sec:experiments-quadruped}

\begin{figure*}[t]
    \centering
    \includegraphics[width=0.8\linewidth]{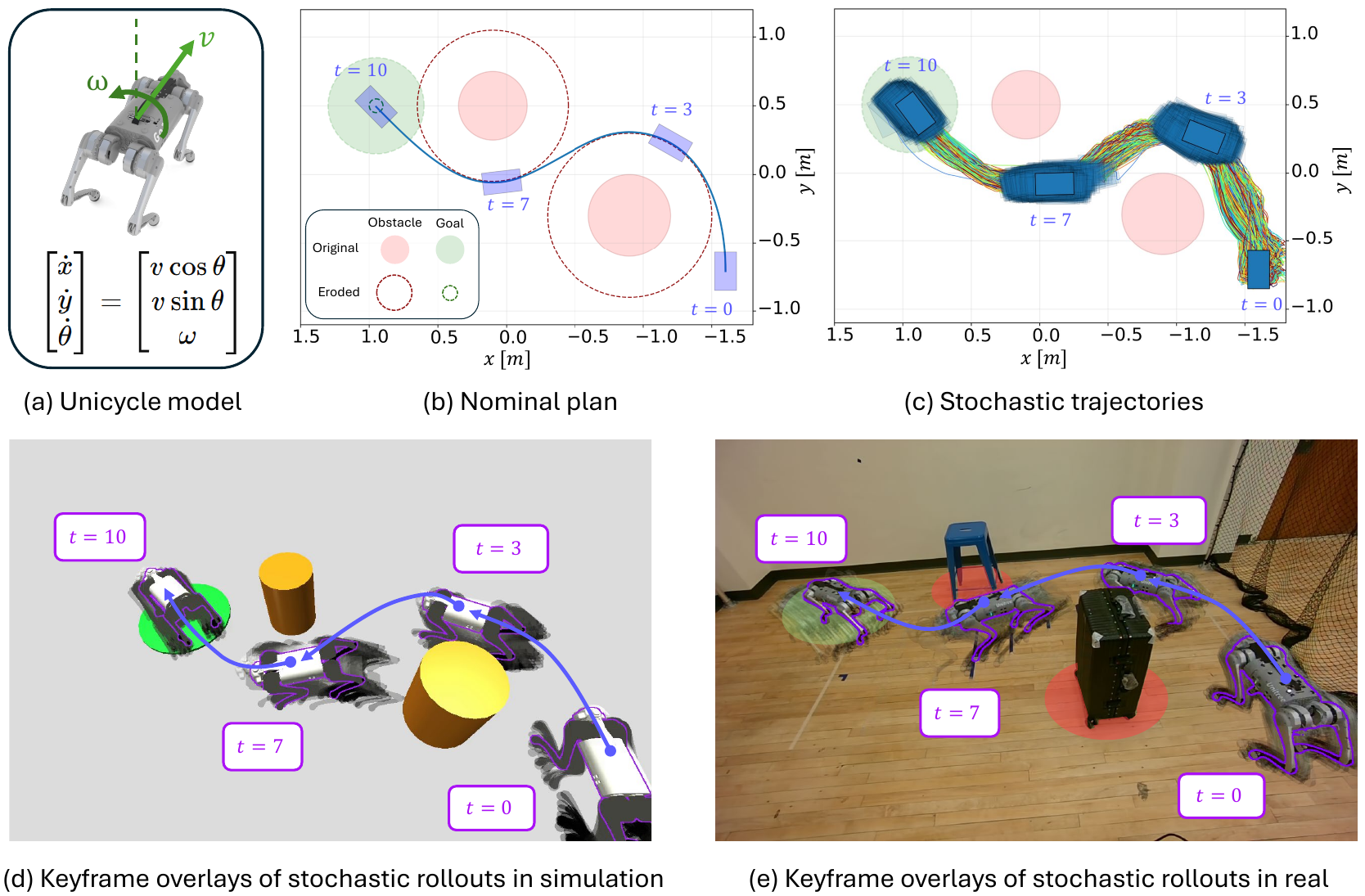}
    \caption{\textbf{Quadrupedal robot experiments on the Go1 reach-avoid task.}
    (a) High-level abstraction of the quadruped by a kinematic unicycle model with forward-speed and yaw-rate commands. 
    (b) Nominal plan generated on the unicycle abstraction under the eroded STL specification. 
    (c) Stochastic closed-loop rollouts in simulation under the feedback controller, showing $5000$ trajectories of the quadruped base projected onto the planar task space. 
    (d) Keyframe overlay of representative stochastic rollouts in IsaacGym. 
    (e) Keyframe overlay of 50 hardware executions on the real Unitree Go1. One representative trajectory is highlighted in purple and other trajectories are overlaid transparently.}

    \label{fig:go1_reachavoid}
\end{figure*}

\begin{figure}[t]
    \centering
    \includegraphics[width=0.7\linewidth]{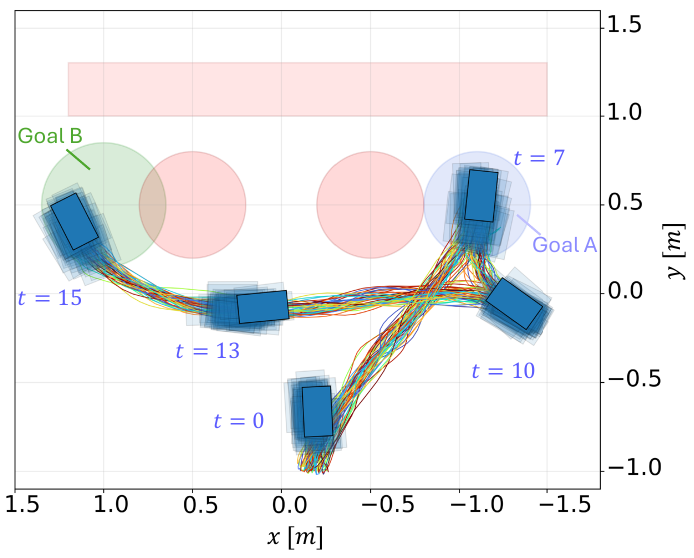}
    \caption{\textbf{Real Go1 pass-before trajectories.} The figure shows $50$ hardware executions of the pass-before task under the feedback controller. The corresponding nominal plan and keyframe overlay are shown in Fig.~\ref{fig:teaser}.}
    \label{fig:go1_passbefore_real}
\end{figure}

We demonstrate the applicability of the proposed framework on a quadrupedal robot controlled through a target base-velocity interface. Rather than planning directly in the full legged dynamics, we abstract the robot base with a unicycle model, which captures the high-level motion relevant to navigation while remaining tractable for STL motion planning. The resulting nominal unicycle trajectory is then tracked by a feedback controller that outputs target linear and angular velocities, which are passed to the low-level velocity-tracking policy of the quadruped. An illustration of the overall pipeline is shown in Fig.~\ref{fig:go1_reachavoid}.

This experiment highlights the main advantage of feedback motion planning: the nominal planner operates on a simplified model, while the feedback layer compensates for modeling mismatch between the abstraction and the actual robot.

The quadruped is modeled at the planning level by a stochastic kinematic unicycle with state
$x_t = [x_t, y_t, \theta_t]^\top$ and control $u_t = [v_t, \omega_t]^\top$, where
$v_t$ is the forward speed and $\omega_t$ is the yaw-rate command. The
dynamics are
\begin{equation}
\mathrm{d}X_t =
\begin{bmatrix}
v_t \cos \theta_t \\
v_t \sin \theta_t \\
\omega_t
\end{bmatrix}\mathrm{d}t
+ G\,\mathrm{d}W_t,
\end{equation}
where $W_t$ is a Wiener process and $G$ captures model mismatch. This mismatch can be treated as stochastic noise because the true legged dynamics are hybrid and include repeated impact events when the feet contact the ground. These impacts perturb the robot base and are well modeled as stochastic disturbances at the abstraction level.

The high-level feedback controller outputs $u=[v,\omega]^\top$, 
which is mapped to the Go1 target-velocity interface as
\[
[v_x,v_y,\text{yaw\_rate}] = [v,0,\omega].
\]

We evaluate the framework on two navigation tasks defined on this unicycle abstraction and used in both simulation and hardware experiments.

\paragraph{Go1 reach-avoid.}
Let
\[
\mathcal{G}=\{x:\|\mathbf{p}(x)-(0.5,0.95)\|\le 0.35\},
\]
and let the obstacle sets be
\begin{align*}
\mathcal{O}_1 &= \{x:\|\mathbf{p}(x)-(-0.3,-0.9)\|\le 0.35\}, \\
\mathcal{O}_2 &= \{x:\|\mathbf{p}(x)-(0.5,0.1)\|\le 0.30\}.
\end{align*}
The STL specification is
\begin{equation}
\label{eq:go1_reach_avoid_stl}
\varphi
=
\Eventually_{[0,T]}\pi_{\mathcal{G}}
\;\wedge\;
\Always_{[0,T]}\big(\neg \pi_{\mathcal{O}_1}\wedge \neg \pi_{\mathcal{O}_2}\big).
\end{equation}
In words, the robot must eventually reach the goal region while always avoiding both circular obstacles. The start state is $x_0=[-0.71,-1.6,0.0]^\top$. Figure~\ref{fig:go1_reachavoid} visualizes the unicycle abstraction, the nominal plan, stochastic simulation rollouts, and keyframe overlays for this task.

\paragraph{Go1 pass-before.}
Let
\begin{align*}
\mathcal{A} &= \{x:\|\mathbf{p}(x)-(0.5,-1.1)\|\le 0.30\}, \\
\mathcal{B} &= \{x:\|\mathbf{p}(x)-(0.5,1.0)\|\le 0.35\},
\end{align*}
let the box obstacle be
\[
\mathcal{O}_{\mathrm{box}}=[1.0,1.3]\times[-1.5,1.2],
\]
and let the circular obstacles be
\begin{align*}
\mathcal{O}_1 &= \{x:\|\mathbf{p}(x)-(0.5,-0.5)\|\le 0.30\}, \\
\mathcal{O}_2 &= \{x:\|\mathbf{p}(x)-(0.5,0.5)\|\le 0.30\}.
\end{align*}
The STL specification is
\begin{align*}
\label{eq:go1_pass_before_stl}
\varphi
=&
\big(\neg \pi_{\mathcal{B}}\big)\Until_{[0,T]}\pi_{\mathcal{A}}
\;\wedge\;
\Eventually_{[0,T]}\pi_{\mathcal{B}} \\
&\;\wedge\;
\Always_{[0,T]}\big(\neg \pi_{\mathcal{O}_{\mathrm{box}}}\wedge \neg \pi_{\mathcal{O}_1}\wedge \neg \pi_{\mathcal{O}_2}\big).
\end{align*}
In words, the robot must reach region $\mathcal{A}$ before entering region $\mathcal{B}$, while always avoiding the box obstacle and the two circular obstacles. The start state is $x_0=[-0.7,-0.2,0.0]^\top$. The corresponding nominal plan and hardware keyframe overlay are visualized in Fig.~\ref{fig:teaser}. The stochastic hardware trajectories for this task are visualized in Fig.~\ref{fig:go1_passbefore_real}.

\paragraph{Simulation setting}
We first evaluate these tasks in IsaacGym~\cite{makoviychuk2021isaacgymhighperformance}, which enables large-scale parallel rollout of the quadruped model. The high-level feedback controller produces target forward-speed and yaw-rate commands, which are executed by the low-level \emph{walk-these-ways} velocity-tracking policy~\cite{margolis2023walk}. To calibrate the abstraction noise, we collect rollout data under a diverse set of target-velocity commands and compare the observed next-step base motion with the one-step prediction of the unicycle model. We then estimate the covariance of these one-step residuals and set $G$ accordingly. For both tasks we use
\[
G=
\begin{bmatrix}
0.0091 & 0.0001 & 0.0003 \\
0.0001 & 0.0093 & 0.0006 \\
0.0003 & 0.0006 & 0.0214
\end{bmatrix}.
\]
Although this calibration process is empirical, we can also obtain statistical guarantees that this bound overapproximates the true noise through calibration techniques based on extreme value theory \cite{knuth2023statistical} or conformal prediction \cite{srinivasan2026safety}. For each task, we generate a nominal plan on the unicycle abstraction and then execute $5000$ stochastic rollouts in simulation. Both the reach-avoid and pass-before tasks achieve $100\%$ STL satisfaction in these simulation rollouts. Fig.~\ref{fig:go1_reachavoid} shows representative results for the reach-avoid task, including the nominal plan, stochastic trajectories, and simulation keyframe overlays.

\paragraph{Real-world setting}
We then deploy our method on a real-world Unitree Go1 robot to accomplish the same two STL tasks. The robot base state is estimated online using a Vicon motion capture system. In particular, we extract the planar position and yaw angle and use them as the unicycle state $x=[x,y,\theta]^\top$. For each task, we execute $50$ hardware runs under the feedback controller, all of which successfully complete the task. 
Fig.~\ref{fig:go1_reachavoid} shows the hardware keyframe overlay for the reach-avoid task, and Fig.~\ref{fig:teaser} shows the hardware keyframe overlay for the pass-before task. Fig.~\ref{fig:go1_passbefore_real} overlays $50$ hardware executions of the pass-before task.
The real-world trajectories are subjected to additional sources of uncertainty such as state estimation noise and actuation noise. Nevertheless, all hardware runs satisfy the STL specifications, demonstrating successful hardware deployment of the proposed feedback motion planning pipeline.

\section{Conclusion}

We present a feedback motion planning framework for continuous-time stochastic nonlinear systems under STL specifications. The central idea is to convert the original chance-constrained STL trajectory optimization problem into a deterministic surrogate by eroding the STL predicates using a PRT around a nominal trajectory. To make this reduction practical, we developed contraction-based PRT bounds together with feedback tracking controllers, and showed how these ingredients can be implemented with numerical optimization.
The resulting framework provides a systematic way to combine high-level temporal-logic planning with feedback control and probabilistic guarantees. The experiments demonstrate that the method scales across multiple simulation benchmark systems, is less conservative and achieves higher specification satisfaction probability than baselines, and transfers to a real quadrupedal platform. Future work will investigate tighter integration between planning, feedback synthesis, and uncertainty quantification.

\appendix
\subsection{Details on Chance-Constraint Handling in Baselines}
\label{sec:appendix_baselines}

We consider two representative stochastic STL baselines on a canonical double-integrator reach-avoid task. Although our method is formulated for continuous-time stochastic systems and provides a continuous-time trajectory-level guarantee, both baselines considered here are discrete-time methods. Therefore, for this comparison only, we discretize the double-integrator SDE and enforce the STL formula on the resulting support grid. This yields a simpler problem than the continuous-time chance-constrained STL problem addressed by our method. The purpose of this subsection is to make explicit how each baseline converts the STL chance constraint into deterministic optimization constraints.

Consider the discrete-time double-integrator model obtained from~\eqref{eq:exp_di_dynamics} under zero-order-hold control with step size $\Delta t$:
\begin{equation*}
\label{eq:baseline_di_discrete}
X_{k+1}=AX_k+Bu_k+w_k,\qquad w_k\sim\mathcal{N}(0,Q_d),
\end{equation*}
where
\begin{equation*}
A=
\begin{bmatrix}
1&0&\Delta t&0\\
0&1&0&\Delta t\\
0&0&1&0\\
0&0&0&1
\end{bmatrix},\qquad
B=
\begin{bmatrix}
\frac{1}{2}\Delta t^2&0\\
0&\frac{1}{2}\Delta t^2\\
\Delta t&0\\
0&\Delta t
\end{bmatrix}.
\end{equation*}
Since $G=0.02I_4$ in~\eqref{eq:exp_di_dynamics}, the exact discrete-time covariance is
\begin{equation*}
\label{eq:baseline_qd}
Q_d=0.02^2
\begin{bmatrix}
\Delta t+\frac{\Delta t^3}{3}&0&\frac{\Delta t^2}{2}&0\\
0&\Delta t+\frac{\Delta t^3}{3}&0&\frac{\Delta t^2}{2}\\
\frac{\Delta t^2}{2}&0&\Delta t&0\\
0&\frac{\Delta t^2}{2}&0&\Delta t
\end{bmatrix}.
\end{equation*}
Let
\begin{equation*}
\label{eq:baseline_projection_matrix}
P=\begin{bmatrix}I_2 & 0_{2\times 2}\end{bmatrix}
\end{equation*}
be the position projection, so that $p_k=P x_k$. Let $p_{\mathrm{goal}}$ and $R_{\mathrm{goal}}$ denote the center and radius of the goal disk, and let $p_{\mathrm{obs}}$ and $R_{\mathrm{obs}}$ denote the center and radius of the circular obstacle. We define the reach-avoid predicates
\begin{align*}
\pi_{\text{goal}}(k)&:=\left(\|P X_k-p_{\mathrm{goal}}\|_2\le R_{\mathrm{goal}}\right),\\
\pi_{\text{obs}}(k)&:=\left(\|P X_k-p_{\mathrm{obs}}\|_2\ge R_{\mathrm{obs}}\right),
\end{align*}
where $\pi_{\text{goal}}$ encodes reaching the goal region and $\pi_{\text{obs}}$ encodes avoiding the obstacle. The discrete-time reach-avoid STL formula is
\begin{equation}
\label{eq:baseline_ra_formula}
\varphi_{\mathrm{RA}}
:=
\Eventually_{[0,N]}\pi_{\text{goal}}
\;\wedge\;
\Always_{[0,N]}\pi_{\text{obs}}.
\end{equation}
The chance constraint is therefore
\begin{equation}
\label{eq:baseline_ra_chance}
\mathbb{P}\left((X_{0:N},0)\models \varphi_{\mathrm{RA}}\right)
\ge 1-\delta,
\end{equation}
where $\delta\in(0,1)$ is the desired risk level.

\subsubsection{STL-structure based chance decomposition (STLCD)}
\label{sec:baseline_farahani}
The first baseline, which we denote STL-structure based chance decomposition (STLCD), follows the chance-constraint decomposition used in~\cite{farahani2018shrinking}. The original method is developed for affine predicates so that each atomic predicate chance constraint can be converted into a linear deterministic constraint. Since our experiments use circular predicates and solve the resulting problems with nonlinear programming, we keep the same STL-level risk decomposition but convert the circular predicate chance constraints using Gaussian confidence balls.

For a fixed open-loop input sequence, the predicted state distribution satisfies
\begin{align}
\bar x_{0} &= x_0, & \Sigma_0&=0,\nonumber\\
\bar x_{k+1} &= A\bar x_k+Bu_k, & \Sigma_{k+1}&=A\Sigma_kA^\top+Q_d.
\label{eq:farahani_mean_cov}
\end{align}
Thus $Y_k:=P X_k$ is Gaussian with mean $\bar p_k=P\bar x_k$ and covariance $S_k=P\Sigma_kP^\top$. For any risk level $\eta\in(0,1)$, define the scalar position uncertainty radius
\begin{equation*}
\label{eq:farahani_radius}
r_k(\eta):=\sqrt{\chi^2_2(1-\eta)\,\lambda_{\max}(S_k)},
\end{equation*}
where $\chi^2_2(1-\eta)$ is the $(1-\eta)$-quantile of the chi-square distribution with two degrees of freedom. Then
\begin{equation*}
\mathbb{P}\left(\|Y_k-\bar p_k\|_2\le r_k(\eta)\right)\ge 1-\eta.
\end{equation*}
Consequently, the following deterministic constraints are sufficient for the two predicate-level chance constraints used in this reach-avoid task:
\begin{align}
\mathbb{P}\big(\pi_{\text{goal}}(k)\big)\ge 1-\eta
&\Leftarrow
\|\bar p_k-p_{\mathrm{goal}}\|_2\le R_{\mathrm{goal}}-r_k(\eta),
\label{eq:farahani_goal_det}\\
\mathbb{P}\big(\pi_{\text{obs}}(k)\big)\ge 1-\eta
&\Leftarrow
\|\bar p_k-p_{\mathrm{obs}}\|_2\ge R_{\mathrm{obs}}+r_k(\eta).
\label{eq:farahani_obs_det}
\end{align}

We next decompose the STL chance constraint. Let
\begin{equation*}
\varphi_{\text{goal}}:=\Eventually_{[0,N]}\pi_{\text{goal}},
\qquad
\varphi_{\text{obs}}:=\Always_{[0,N]}\pi_{\text{obs}}.
\end{equation*}
Choose outer risk allocations $\delta_{\text{goal}},\delta_{\text{obs}}\ge0$ satisfying
\begin{equation*}
\label{eq:outer_allocation}
\delta_{\text{goal}}+\delta_{\text{obs}}\le \delta.
\end{equation*}
It is sufficient to impose $\mathbb{P}(\varphi_{\text{goal}})\ge 1-\delta_{\text{goal}}$ and $\mathbb{P}(\varphi_{\text{obs}})\ge 1-\delta_{\text{obs}}$. In the baseline implementation, we use the uniform outer allocation
$\delta_{\text{goal}}=\delta_{\text{obs}}=\frac{\delta}{2}.$

For the always component, the union bound gives the predicate-level constraints
\begin{equation*}
\label{eq:farahani_always_allocation}
\mathbb{P}\big(\pi_{\text{obs}}(k)\big)\ge 1-\eta_{\text{obs},k},\qquad
\sum_{k=0}^{N}\eta_{\text{obs},k}\le \delta_{\text{obs}}.
\end{equation*}
With uniform allocation over time, $\eta_{\text{obs},k}=\delta_{\text{obs}}/(N+1)$, and~\eqref{eq:farahani_obs_det} gives
\begin{equation}
\label{eq:farahani_always_det}
\|\bar p_k-p_{\mathrm{obs}}\|_2\ge R_{\mathrm{obs}}+r_k\!\left(\frac{\delta_{\text{obs}}}{N+1}\right),
\qquad k=0,\ldots,N.
\end{equation}

For the eventually component, the goal only needs to be reached at one time instant. The resulting deterministic goal constraint is
\begin{equation}
\label{eq:farahani_eventually_det}
\exists j\in\{0,\ldots,N\}:\quad
\|\bar p_j-p_{\mathrm{goal}}\|_2
\le R_{\mathrm{goal}}-r_j(\delta_{\text{goal}}).
\end{equation}

Under the uniform outer allocation, the predicate-level risks become
$\eta_{\text{obs},k}=\frac{\delta}{2(N+1)}$.
Hence, the uniformly allocated deterministic constraints are
\begin{align}
&\|\bar p_k-p_{\mathrm{obs}}\|_2
\ge R_{\mathrm{obs}}+r_k\!\left(\frac{\delta}{2(N+1)}\right),
\; k=0,\ldots,N,
\label{eq:uniform_farahani_obs_det}\\
&\exists j\in\{0,\ldots,N\}:\quad
\|\bar p_j-p_{\mathrm{goal}}\|_2
\le R_{\mathrm{goal}}-r_j\!\left(\frac{\delta}{2}\right).
\label{eq:uniform_farahani_goal_det}
\end{align}
This baseline allocates risk at the STL-component level and then across the support times of the always operator.

\subsubsection{Discrete-time probabilistic reachable tube (DPRT)}
\label{sec:baseline_vlahakis}
The second baseline, which we denote discrete-time probabilistic reachable tube (DPRT), follows the probabilistic tube-based tightening approach of~\cite{vlahakis2024probabilistic}. Assume a stabilizing feedback gain $K$ is given. We decompose the stochastic state into a nominal component and an error component,
\begin{equation*}
X_k=z_k+e_k,
\end{equation*}
and apply the controller
\begin{equation*}
\label{eq:tube_controller}
u_k=v_k+K(X_k-z_k)=v_k+Ke_k.
\end{equation*}
Then
\begin{align}
z_{k+1}&=Az_k+Bv_k,
\label{eq:tube_nominal_dyn}\\
e_{k+1}&=A_\text{cl}e_k+w_k,
\qquad A_\text{cl}:=A+BK,
\label{eq:tube_error_dyn}
\end{align}
with $z_0=x_0$ and $e_0=0$.

The baseline constructs a probabilistic reachable tube for~\eqref{eq:tube_error_dyn}. For Gaussian disturbances, choose a disturbance confidence region
\begin{equation*}
\label{eq:disturbance_cr}
\mathcal{W}_{\beta}:=\left\{w\in\mathbb{R}^4:
 w^\top Q_d^{-1}w\le \chi^2_4(\beta)\right\}.
\end{equation*}
Choosing $\beta=1-\delta/N$ gives a union-bound trajectory confidence level of at least $1-\delta$. Starting from $\mathcal{E}_0=\{0\}$, propagate the error reachable sets by
\begin{equation}
\label{eq:error_set_recursion}
\mathcal{E}_{k+1}=A_\text{cl}\mathcal{E}_k\oplus \mathcal{W}_{\beta},
\qquad k=0,\ldots,N-1.
\end{equation}
Then the tube $\mathcal{E}:=\mathcal{E}_0\times\cdots\times\mathcal{E}_N$ satisfies $\mathbb{P}(e_{0:N}\in\mathcal{E})\ge 1-\delta.$
Define the projected tube radius
\begin{equation*}
\label{eq:projected_tube_radius}
r_k^{\mathrm{tube}}:=\max_{e\in\mathcal{E}_k}\|Pe\|_2.
\end{equation*}
If the nominal trajectory satisfies the predicates tightened by $r_k^{\mathrm{tube}}$, then every trajectory whose error remains in the tube satisfies the original predicates. For the circular goal and obstacle, this gives
\begin{align*}
\|Pz_k-p_{\mathrm{goal}}\|_2&\le R_{\mathrm{goal}}-r_k^{\mathrm{tube}}
&&\Rightarrow \pi_{\text{goal}}(k),\\
\|Pz_k-p_{\mathrm{obs}}\|_2&\ge R_{\mathrm{obs}}+r_k^{\mathrm{tube}}
&&\Rightarrow \pi_{\text{obs}}(k).
\end{align*}
Therefore the chance constraint~\eqref{eq:baseline_ra_chance} is conservatively enforced by the deterministic STL constraints
\begin{align}
&\exists k\in\{0,\ldots,N\}:
\|Pz_k-p_{\mathrm{goal}}\|_2\le R_{\mathrm{goal}}-r_k^{\mathrm{tube}},
\label{eq:tube_eventually_det}\\
&\|Pz_k-p_{\mathrm{obs}}\|_2\ge R_{\mathrm{obs}}+r_k^{\mathrm{tube}},
\qquad k=0,\ldots,N.
\label{eq:tube_always_det}
\end{align}
In contrast to the predicate-level baseline, this method does not allocate risk through the STL syntax. The entire probability budget is used to construct a trajectory-level tube, and the STL formula is then enforced deterministically on the tightened predicates.



\bibliographystyle{ieeetr}
\bibliography{references}
\end{document}